\DeclarePairedDelimiter{\prn}{(}{)}
\DeclarePairedDelimiter{\set}{\{}{\}}
\DeclarePairedDelimiterX{\Set}[2]{\{}{\}}{\,{#1}\;\delimsize|\;{#2}\,}
\DeclarePairedDelimiter{\abs}{|}{|}
\DeclarePairedDelimiter{\norm}{\|}{\|}
\DeclarePairedDelimiter{\inpr}{\langle}{\rangle}
\DeclarePairedDelimiter{\brc}{[}{]}
\DeclarePairedDelimiterX{\Brc}[2]{[}{]}{\,{#1}\;\delimsize|\;{#2}\,}
\newrobustcmd{\MakeTitleCase}[1]{%
  \ifthenelse{\ifcurrentfield{title}}%
    {\MakeSentenceCase{#1}}%
    {#1}}
\algnewcommand{\algorithmicinput}{\textbf{Input:}}
\algnewcommand{\Input}{\item[\algorithmicinput]}
\algnewcommand{\algorithmicoutput}{\textbf{Input:}}
\algnewcommand{\Output}{\item[\algorithmicoutput]}
\algnewcommand{\Break}{\textbf{break}}
\crefname{step}{Step}{Steps}
\Crefname{step}{Step}{Steps}
\newtheorem{theorem}{Theorem}[section]
\newtheorem{lemma}[theorem]{Lemma}
\newtheorem{proposition}[theorem]{Proposition}
\newtheorem{corollary}[theorem]{Corollary}
\theoremstyle{definition}
\newtheorem{definition}[theorem]{Definition}
\newtheorem{remark}[theorem]{Remark}
\newcommand{\Z}{\mathbb{Z}}
\newcommand{\R}{\mathbb{R}}
\DeclareMathOperator{\argmin}{arg\,min}
\DeclareMathOperator{\argmax}{arg\,max}
\newcommand{\I}{I}
\newcommand{\Dcal}{\mathcal{D}}
\newcommand{\E}{\mathop{\mathbb{E}}}
\newcommand{\subloss}{{suboptimality loss}\xspace}
\newcommand{\sub}{{suboptimality}\xspace}
\newcommand{\subandestloss}{{suboptimality and estimate losses}\xspace}
\newcommand{\estloss}{{estimate loss}\xspace}
\newcommand{\totloss}{{total loss}\xspace}
\title{Revisiting Online Learning Approach to Inverse Linear Optimization: A Fenchel--Young Loss Perspective and Gap-Dependent Regret Analysis}
\author{%
Shinsaku Sakaue\\
The University of Tokyo and RIKEN AIP\\
\href{mailto:sakaue@mist.i.u-tokyo.ac.jp}{sakaue@mist.i.u-tokyo.ac.jp}
\and
Han Bao\\
Kyoto University\\
\href{mailto:bao@i.kyoto-u.ac.jp}{bao@i.kyoto-u.ac.jp}
\and
Taira Tsuchiya\\
The University of Tokyo and RIKEN AIP\\
\href{mailto:tsuchiya@mist.i.u-tokyo.ac.jp}{tsuchiya@mist.i.u-tokyo.ac.jp}
}
\date{}
\begin{document}

\maketitle

\begin{abstract}
  This paper revisits the online learning approach to inverse linear optimization studied by \citet{Barmann2017-wl}, where the goal is to infer an unknown linear objective function of an agent from sequential observations of the agent's input-output pairs. 
  First, we provide a simple understanding of the online learning approach through its connection to online convex optimization of \emph{Fenchel--Young losses}. 
  As a byproduct, we present an offline guarantee on the \emph{suboptimality loss}, which measures how well predicted objective vectors explain the agent's choices, without assuming the optimality of the agent's choices. 
  Second, assuming that there is a gap between optimal and suboptimal objective values in the agent's decision problems, we obtain an upper bound independent of the time horizon $T$ on the sum of suboptimality and \emph{estimate losses}, where the latter measures the quality of solutions recommended by predicted objective vectors. 
  Interestingly, our gap-dependent analysis achieves a faster rate than the standard $O(\sqrt{T})$ regret bound by exploiting structures specific to inverse linear optimization, even though neither the loss functions nor their domains possess desirable properties, such as strong convexity.\looseness=-1
\end{abstract}

\section{Introduction}\label{sec:introduction}
Linear optimization is arguably the most widely used model of decision-making.
Inverse linear optimization is its inverse problem, where the goal is to infer linear objective functions from observed outcomes.
Since the early development in geographical science \citep{Tarantola1988-tq,Burton1992-dc}, inverse linear optimization has been an important subject of study \citep{Ahuja2001-cv,Heuberger2004-zv,Chan2019-zg} and used in various applications, ranging from route recommendation to healthcare \citep{Chan2023-qk}.

Inverse linear optimization is particularly interesting when forward linear optimization is a decision-making model of a human \emph{agent}.\footnote{
  An agent is sometimes called an expert, but we here use ``agent'' to avoid confusion with experts in online learning.
} 
Then, the linear objective function represents the agent's internal preference.
If the agent repeatedly takes an action upon facing a set of feasible actions, inverse linear optimization can be seen as online learning of the agent's internal preference from pairs of the feasible sets and the agent's choices. 
\citet{Barmann2017-wl} studied this setting and proposed an elegant approach based on online learning, which is the focus of this paper and is described below.

Consider an agent who addresses decision problems of the following linear-optimization form for $t = 1,\dots,T$:
\begin{equation}\label{eq:decision-problem}
  \mathrm{maximize}\;\; \inpr{c^*, x} 
  \qquad 
  \mathrm{subject~to}\;\; x \in X_t,
\end{equation}
where $c^* \in \R^n$ ($n \in \Z_{>0}$) is the agent's objective vector, and $X_t \subseteq \R^n$ is the set of actions the agent can take in round $t$, which is not necessarily convex.
Let $x_t \in X_t$ be the agent's choice in round~$t$.
Informally, we want to learn $c^*$ from the input-output pairs, $\set*{(X_t, x_t)}_{t=1}^T$.

To this end, we need learning objectives that quantify how good prediction $\hat c$ of $c^*$ is based on the observations, $\set*{(X_t, x_t)}_{t=1}^T$.
Let $\Theta \subseteq \R^n$ be the space of objective vectors, from which we pick prediction $\hat c \in \Theta$.
\citet{Barmann2017-wl} considered the following learning objective:
\begin{equation}\label{eq:barmann-objective}
  \begin{aligned}
    \max\Set*{\inpr{\hat c, x'}}{x' \in X_t} - \inpr{\hat c, x_t}
    &&
    t \in \set{1,\dots,T}. 
  \end{aligned}
\end{equation}
If (non-zero) $\hat c$ makes \eqref{eq:barmann-objective} zero for all $t=1,\dots,T$, then $\hat c$ consistently explains why the agent takes $x_t \in X_t$. 
Since~\eqref{eq:barmann-objective} represents the \sub of $x_t$ for $\hat c$, it is called \emph{\subloss} \citep{Mohajerin-Esfahani2018-jf,Chen2020-kc,Sun2023-nc}.
For any $\hat c \in \Theta$, letting $\hat x_t \in \argmax_{x'\in X_t}\inpr{\hat c, x'}$, we~can write the \subloss \eqref{eq:barmann-objective} as follows:
\begin{equation}\label{eq:suboptloss}
  \begin{aligned}
    \ell^{\mathrm{sub}}_t(\hat c) \coloneqq \inpr{\hat c, \hat x_t - x_t} 
    &&
    t \in \set{1,\dots,T}.
  \end{aligned}
\end{equation}

Relying solely on the \subloss is sometimes insufficient, as the zero \subloss is attained by trivial all-zero prediction $\hat c = 0$ (cf.\ \citealt[Section~6.1]{Mishra2024-nk}).\footnote{\citet{Barmann2017-wl} avoided this issue by considering $\Theta$ that does not contain the all-zero vector---specifically, the probability simplex $\Theta = \Set{c \in \R^n_{\ge0}}{\norm{c}_1 = 1}$.\label{footnote:all-zero}}
Another natural learning objective free from this issue is the so-called \emph{\estloss} \citep{Chen2020-kc,Sun2023-nc},\footnote{These studies also consider the prediction loss, $\norm{\hat c - c}^2$, which we cannot use as no direct supervision on $\hat c$ is given.} defined by 
\begin{equation}\label{eq:estimate-loss}
  \begin{aligned}
    \ell^{\mathrm{est}}_t(\hat c)
    \coloneqq
    \inpr{c^*, x_t - \hat x_t} 
    &&
    t \in \set{1,\dots,T}.
  \end{aligned}
\end{equation}
Ideally, we want to find $\hat c \in \Theta$ that attains $\ell^{\mathrm{sub}}_t(\hat c) = \ell^{\mathrm{est}}_t(\hat c) = 0$ for $t=1,\dots,T$.
\citet{Barmann2017-wl} showed that a proxy goal is achievable in an online setting. 
Specifically, for each round $t$, given $\set*{(X_i, x_i)}_{i=1}^{t-1}$, we can compute prediction $\hat c_t \in \Theta$ so that
\begin{equation}\label{eq:subopt-est-loss}
  \sum_{t=1}^T
  \prn[\big]{
    \underbrace{\ell^{\mathrm{sub}}_t(\hat c_t)
    +
    \ell^{\mathrm{est}}_t(\hat c_t)}_{\text{\tiny Total loss}}
    }
  =
  \sum_{t=1}^T
  \inpr{\hat c_t - c^*, \hat x_t - x_t}
\end{equation}
grows at the rate of $O(\sqrt{T})$, and hence the average of $\ell^{\mathrm{sub}}_t(\hat c_t) +\ell^{\mathrm{est}}_t(\hat c_t)$ converges to zero as $T\to\infty$. 
For convenience, we call the sum of the \subandestloss, $\ell^{\mathrm{sub}}_t(\hat c_t) +\ell^{\mathrm{est}}_t(\hat c_t)$ in \eqref{eq:subopt-est-loss}, the \emph{\totloss}. 
The idea of \citet{Barmann2017-wl} is based on online learning, which works as follows: for each $t = 1,\dots,T$, an adversary chooses a convex loss $f_t\colon\Theta\to\R$ and a learner computes $\hat c_t \in \Theta$ based on past of observations $\set{f_i}_{i=1}^{t-1}$ so that the \emph{regret} $\sum_{t=1}^T \prn*{f_t(\hat c_t) - f_t(c^*)}$ grows only sublinearly in $T$ for any $c^* \in \Theta$.
\citet{Barmann2017-wl} regarded $f_t\colon\hat c \mapsto \inpr{\hat x_t - x_t, \hat c}$ as a linear loss function by considering $\hat x_t$ independent of~$\hat c$,\footnote{
  Taking the dependence of $\hat x_t$ on $\hat c$ into account, $f_t$ coincides with the non-linear suboptimality loss \eqref{eq:suboptloss}. 
  The above linear-loss perspective is explicitly described in \citet[Section~3.3]{Barmann2020-hh}, an extended arXiv version of \citealt{Barmann2017-wl} that includes additional results, such as an online-gradient-descent approach and an offline guarantee.
} 
and used the well-known multiplicative weight update method (MWU); consequently, the $O(\sqrt{T})$ regret bound of MWU translates to the bound on~\eqref{eq:subopt-est-loss}.

\subsection{Our Contributions}
We revisit the online learning approach of \citet{Barmann2017-wl} and make the following contributions. 

\textbf{Drawing connection to OCO of Fenchel--Young losses.}
While \citet{Barmann2017-wl} applied the idea of online learning to the linear loss $f_t\colon\hat c \mapsto \inpr{\hat x_t - x_t, \hat c}$, we view the problem as online convex optimization (OCO) of \emph{Fenchel--Young losses} \citep{Blondel2020-tu}. 
Specifically, we first show that the \subloss~\eqref{eq:suboptloss} is a particular type of Fenchel--Young loss, allowing us to take advantage of its useful properties (see \cref{prop:fyloss-properties}). 
Then, we show that the \totloss in~\eqref{eq:subopt-est-loss} appears as the \emph{linearized regret} of the Fenchel--Young loss, to which we apply the Follow-The-Regularized-Leader (FTRL) and obtain a bound of $O(\sqrt{T})$ that applies to various situations. 
We thus offer a simple understanding of the online learning approach to inverse linear optimization from the viewpoint of Fenchel--Young losses. 
As a byproduct, we obtain an offline guarantee on the \subloss that enjoys broader applicability than a previous result in \citet{Barmann2020-hh}.\looseness=-1

\textbf{Gap-dependent regret analysis.}
We then discuss going beyond the $O(\sqrt{T})$ bound on the \totloss over~$T$ rounds~\eqref{eq:subopt-est-loss}.
We focus on the case where there is a gap between the optimal and suboptimal objective values in the agent's decision problems~\eqref{eq:decision-problem}; we parameterize this by $\Delta > 0$.
Under this assumption, we obtain a bound of $O(1/\Delta^{2})$, which is independent of the time horizon $T$.
Unlike typical gap assumptions in online learning, our assumption is imposed on the agent's decision problems, highlighting that problem structures specific to inverse linear optimization can help accelerate online learning of the agent's objective vector. 
We discuss situations where the gap assumption is reasonable in \cref{subsec:applications}.\looseness=-1

\subsection{Related Work}

\textbf{Inverse optimization.}
Most studies on inverse optimization are based on the KKT condition \citep{Iyengar2005-fl,Tan2020-lp,Mishra2024-nk}.
The online learning approach of \citet{Barmann2017-wl} enjoys broader applicability as it does not rely on specific representations of the agent's feasible regions, allowing for even non-connected sets (see \cref{subsec:problem-setting}).
\citet{Besbes2021-ak,Besbes2023-zm} recently obtained an $O(n^4\ln T)$ regret bound with respect to $\ell^\mathrm{est}_t$, where $n$ is the dimension of the ambient space of objective vectors. 
More recently, \citet{Sakaue2025-xa} obtained an improved bound of $O(n\ln T)$ on $\ell^\mathrm{est}_t + \ell^\mathrm{sub}_t$. 
On the other hand, our bound in \cref{sec:fast-rate} is independent of $T$ under the gap assumption.
Other online-learning-based approaches with different criteria are also widely studied \citep{Dong2018-ap,Chen2020-kc,Sun2023-nc}.
A minor remark is that a criterion used by \citet{Chen2020-kc}, called the simple loss, is different from the \totloss~\eqref{eq:subopt-est-loss}, although they might look similar.
Indeed, their simple loss is defined as a linear function \citep[Lemma~2]{Chen2020-kc}, while the \totloss~\eqref{eq:subopt-est-loss} consists of non-linear terms, such as $\ell^\mathrm{sub}_t$.\looseness=-1

\textbf{Smart predict, then optimize.}
\emph{Smart predict, then optimize} \citep{Elmachtoub2022-og} is a relevant framework for learning objective functions. 
Similar settings are studied under the name of \emph{decision-focused learning} \citep{Wilder2019-yw} and \emph{contextual linear optimization} \citep{Hu2022-yq}.
These frameworks are intended for learning models that predict objective functions from contextual information. 
To this end, they require past objective functions
as training data, which is not available in our inverse optimization setting.

\textbf{Fenchel--Young loss.}
The Fenchel--Young-loss framework \citep{Blondel2020-tu} offers a general recipe for designing convex surrogate losses in supervised learning. 
Its usefulness in a remotely related scenario---learning models that generate objective functions based on contextual information and optimal solutions---was demonstrated by \citet{Berthet2020-wm}.
Different from their work, we elucidate an unexpected connection between the Fenchel--Young loss and the \subandestloss in inverse linear optimization, which is of interest in its own right. 
\citet{Sakaue2024-mu} used Fenchel--Young losses for online structured prediction, focusing on a different criterion called the \emph{surrogate regret}.

\textbf{Online convex optimization.}
Our work is intended to benefit from rich theory of OCO \citep{orabona2023modern} in inverse linear optimization.
In OCO, going beyond the $O(\sqrt{T})$ regret requires additional assumptions, such as the strong convexity and exp-concavity of loss functions \citep{Hazan2007-ta} and curved decision sets \citep{Huang2017-xs}.
Also, gap conditions related to the \emph{learner's} choices sometimes help achieve fast rates in stochastic settings \citep{Lai1985-cf,Auer2002-ad}. 
Different from them, our gap condition, detailed in \cref{def:gap}, is imposed on the \emph{agent's} decision problems~\eqref{eq:decision-problem} and hence specific to inverse optimization.

\section{Preliminaries}
\textbf{Notation.}
Let $\inpr{\cdot, \cdot}$ be the standard inner product on $\R^n$, or the paring on the primal space $\R^n$ of agent's actions and the dual space $\Theta \subseteq \R^n$ of predictions.
Let $\norm{\cdot}$ denote a norm on the agent's space $\R^n$ and $\norm{\cdot}_\star$ the dual norm on $\Theta$. 
We sometimes omit the subscript of $t$ when the dependence on $t$ is irrelevant.
For example, we use $(X, x)$ to represent a pair of agent's feasible set and output, $\hat c \in \Theta$ a prediction, and $\hat x \in \argmax_{x' \in X} \inpr{\hat c, x'}$ an optimal solution in terms of prediction $\hat c$.\footnote{As $\hat x$ depends on $\hat c$, it should be read like $\hat x(\hat c)$. However, we omit the dependence on $\hat c$ for brevity. We use the ``hat'' accent on $x$ to represent the dependence on some prediction $\hat c$.}
Similarly, let $\ell^\mathrm{sub}$ and $\ell^\mathrm{est}$ denote the \subandestloss, respectively, without specifying the round, $t$.\looseness=-1

\subsection{Problem Setting}\label{subsec:problem-setting}
As described in \cref{sec:introduction}, we aim to learn an unknown objective vector $c^* \in \R^n$ of an agent who addresses problem~\eqref{eq:decision-problem} for $t = 1,\dots,T$. 
We assume that every feasible set $X_t \subseteq \R^n$ is a non-empty compact set containing the agent's choice $x_t$. 
Until \cref{sec:fast-rate}, $x_t \in X_t$ is not necessarily optimal for $c^*$, unlike \citet{Barmann2017-wl,Barmann2020-hh}.
Also, $X_t$ is not necessarily convex; the only requirement about $X_t$ is that we can compute an optimal solution $x \in \argmax_{x' \in X_t} \inpr{c, x'}$ for any $c \in \R^n$.
A typical tractable case is when $X_t$ is a polytope specified by linear constraints; then, computing an optimal solution reduces to a linear program.
Even if $X_t$ is non-convex (and even non-connected, as in the case of integer linear programs), we can use empirically efficient solvers, such as Gurobi, to find an optimal solution.
In \cref{sec:fast-rate}, we will additionally assume that the agent's choices are optimal, i.e., $x_t \in \argmax_{x'\in X_t}\inpr{c^*, x'}$, and that the agent's decision problems~\eqref{eq:decision-problem} satisfy a certain gap condition, which we explain later.

We assume that the space of predictions, $\Theta \subseteq \R^n$, is a closed convex set containing the agent's true objective~$c^*$.
Additionally, while not strictly necessary, it is common to assume that $\Theta$ does not contain the origin; otherwise, trivial all-zero prediction $\hat c = 0$ attains $\ell^\mathrm{sub}_t(0) = 0$, as discussed in \cref{sec:introduction}.
For example, \citet{Barmann2017-wl} studied the case where $\Theta = \Set{c \in \R^n_{\ge0}}{\norm{c}_1 = 1}$, i.e., the probability simplex.
While our analysis does not use $0 \notin \Theta$, we suppose that this condition holds in \cref{subsec:online-to-batch}, as we focus on the suboptimality loss therein.

Similar to \citet{Barmann2017-wl}, we mainly focus on the online setting, which goes as follows. 
For each $t=1,\dots,T$, we compute $\hat c_t \in \Theta$ using past observations $\set*{(X_i, x_i)}_{i=1}^{t-1}$. 
Then, we observe $(X_t, x_t)$ and incur the $t$th total loss $\ell^{\mathrm{sub}}_t(\hat c_t) + \ell^{\mathrm{est}}_t(\hat c_t)$. 
We aim to minimize the \totloss over $T$ rounds~\eqref{eq:subopt-est-loss}.
In \cref{subsec:online-to-batch}, we discuss a stochastic offline setting and give a bound on the \subloss in expectation, which we detail later.

\subsection{Fenchel--Young Loss}
We briefly describe the basics of the Fenchel--Young loss \citep{Blondel2020-tu}, which is defined as follows. 

\begin{definition}\label{def:fy-loss}
  Let $\Omega\colon\R^n\to \R\cup\set{+\infty}$ be a (possibly non-convex) lower semi-continuous function. 
  For any given $x \in \R^n$ with $\Omega(x) < +\infty$, the Fenchel--Young loss is defined as a function of $c \in \R^n$ as follows: 
  \[
    L_\Omega(c; x) = \Omega^*(c) + \Omega(x) - \inpr{c, x},
  \]
  where $\Omega^*(c) \coloneqq \sup_{x \in \R^n} \inpr{c, x} - \Omega(x)$ is the convex conjugate of $\Omega$.
\end{definition}

Many common loss functions in machine learning, such as the squared and logistic losses, can be represented as Fenchel--Young losses by choosing  appropriate $\Omega$.
We can interpret $L_\Omega(c; x)$ as the discrepancy between primal and dual variables, $x$ and $c$, measured by the \emph{Fenchel--Young inequality}, $\Omega^*(c)+ \Omega(x) \ge \inpr{c, x}$, a fundamental inequality in convex analysis that readily follows from the definition of $\Omega^*$.
In our case, we want to measure the quality of (dual) prediction $\hat c$ by using the agent's output $x$ as (primal) feedback.
Therefore, the Fenchel--Young loss can be useful to measure the quality of prediction $\hat c$ in inverse linear optimization.

We then present some properties of the Fenchel--Young loss that will be useful in the subsequent discussion.
We refer the reader to \citet[Proposition~2]{Blondel2020-tu} for more information.

\begin{proposition}\label{prop:fyloss-properties}
  For any $x \in \R^n$, the Fenchel--Young loss $L_\Omega(\cdot; x)$ is non-negative and convex. 
  For any $\hat c \in \R^n$, if $\hat x \in \argmax_{x' \in \R^n}{\inpr{\hat c, x'} - \Omega(x')}$ exists, then the residual vector is a subgradient, i.e., $\hat x - x \in \partial L_\Omega(\hat c; x)$.
\end{proposition}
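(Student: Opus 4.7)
The three claims are all standard consequences of convex conjugacy, so the plan is to verify them in sequence by unpacking the definition of $L_\Omega(c;x) = \Omega^*(c) + \Omega(x) - \inpr{c,x}$.

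First I would handle non-negativity. By the definition of the convex conjugate, $\Omega^*(c) = \sup_{x'\in\R^n}\bigl(\inpr{c,x'} - \Omega(x')\bigr) \ge \inpr{c,x} - \Omega(x)$, which is exactly the Fenchel--Young inequality. Rearranging gives $L_\Omega(c;x) \ge 0$ for every $x$ with $\Omega(x) < +\infty$.

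Next, for convexity in $c$: $\Omega^*$ is convex as a pointwise supremum of affine functions $c \mapsto \inpr{c,x'} - \Omega(x')$, the term $\Omega(x)$ is a constant in $c$, and $-\inpr{c,x}$ is affine in $c$. The sum of these is convex, so $L_\Omega(\cdot;x)$ is convex.

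The only mildly substantive step is the subgradient identification. Assuming $\hat x \in \argmax_{x'\in\R^n}\bigl(\inpr{\hat c, x'} - \Omega(x')\bigr)$ exists, we have the equality case $\Omega^*(\hat c) = \inpr{\hat c, \hat x} - \Omega(\hat x)$. Combining this with the inequality $\Omega^*(c) \ge \inpr{c, \hat x} - \Omega(\hat x)$ that holds for every $c\in\R^n$ yields
\[
  \Omega^*(c) \ge \Omega^*(\hat c) + \inpr{\hat x, c - \hat c},
\]
i.e., $\hat x \in \partial\Omega^*(\hat c)$. Adding the affine term $-\inpr{c, x}$, whose gradient in $c$ is $-x$, then gives $\hat x - x \in \partial L_\Omega(\hat c; x)$, as required. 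This last step is the only place one has to be mildly careful, since it relies on recognizing that attainment of the sup in $\Omega^*(\hat c)$ at $\hat x$ is precisely the first-order optimality condition placing $\hat x$ in $\partial\Omega^*(\hat c)$; but once that observation is made, the subgradient claim follows by linearity of the subdifferential in the present additive-affine setup.
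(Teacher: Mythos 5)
Your proof is correct, and the first two claims (non-negativity via the Fenchel--Young inequality, convexity via $\Omega^*$ being a pointwise supremum of affine functions of $c$) are argued exactly as in the paper. The one place you diverge is the subgradient claim: the paper simply invokes Danskin's theorem to conclude $\hat x \in \partial \Omega^*(\hat c)$, whereas you prove this inclusion directly from the definition of the conjugate --- attainment of the supremum at $\hat x$ gives $\Omega^*(\hat c) = \inpr{\hat c, \hat x} - \Omega(\hat x)$, while $\Omega^*(c) \ge \inpr{c, \hat x} - \Omega(\hat x)$ for all $c$, and subtracting yields the subgradient inequality $\Omega^*(c) \ge \Omega^*(\hat c) + \inpr{\hat x, c - \hat c}$. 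This is in effect a self-contained proof of the instance of Danskin's theorem being cited, and it has a genuine advantage in this setting: it makes completely transparent that no convexity (or even continuity) of $\Omega$ is used, which is exactly the point the paper emphasizes after the proposition, since here $\Omega = \I_X$ for a possibly non-convex set $X$. Your final step, passing from $\hat x \in \partial \Omega^*(\hat c)$ to $\hat x - x \in \partial L_\Omega(\hat c; x)$ by adding the affine term $c \mapsto \Omega(x) - \inpr{c, x}$, is also sound, since adding $\inpr{\hat x - x, c - \hat c} - \inpr{c - \hat c, x}$ to both sides of the subgradient inequality requires no subdifferential sum rule beyond the definition.
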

We emphasize that \cref{prop:fyloss-properties} holds even if $\Omega$ is non-convex (and even if non-continuous). 
These properties are not difficult to confirm: 
the non-negativity is due to the Fenchel--Young inequality, and the convexity follows from the fact that $\Omega^*(c)$ is the supremum of linear functions $c \mapsto \inpr{c, x} - \Omega(x)$.
The subgradient property follows from Danskin's theorem \citep{Danskin1966-gb}, which ensures $\hat x \in \partial \Omega^*(\hat c)$.

\section{Fenchel--Young-Loss Perspective}\label{sec:fy-loss-perspective}
This section describes a Fenchel--Young-loss perspective on inverse linear optimization.
Below, let $X \subseteq \R^n$ and $x \in X$ be the agent's feasible set and output, respectively. 
For convenience, let $L_X(\cdot; x)$ denote the Fenchel--Young loss with $\Omega = \I_X$,\footnote{When $X$ is a convex set, this Fenchel--Young loss is also called the structured-perceptron loss \citep{Blondel2020-tu}.} where $\I_X\colon\R^n\to\set{0, +\infty}$ is the indicator function of $X$, i.e., $\I_X(x') = 0$ if $x' \in X$ and $+\infty$ otherwise.
Since $X$ is a non-empty compact set, an optimal solution to $\max_{x'\in X} \inpr{c, x'}$ exists for any $c \in \R^n$; therefore, $L_X(\cdot; x)$ satisfies \cref{prop:fyloss-properties}.

\subsection{$\ell^\mathrm{sub}$ is a Fenchel--Young Loss}
First, we observe that for any agent's choice $x \in X$, $L_X(\cdot; x)$ coincides with the \subloss \eqref{eq:suboptloss}, $\ell^{\mathrm{sub}}$.
\begin{proposition}\label{prop:subloss-fyloss}
  The \subloss $\ell^{\mathrm{sub}}$ \eqref{eq:suboptloss} is a Fenchel--Young loss with $\Omega = I_X$, i.e., for any $\hat c \in \R^n$, 
  \begin{equation}
    L_X(\hat c; x) 
    = 
    \max_{x'\in X}\inpr{\hat c, x'}-\inpr{\hat c, x}
    = 
    \inpr{\hat c, \hat x - x}
    =
    \ell^{\mathrm{sub}}(\hat c),
  \end{equation}  
  where $\hat x \in \argmax_{x'\in X}\inpr{\hat c, x'}$.
\end{proposition}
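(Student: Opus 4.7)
The plan is to unpack \cref{def:fy-loss} with the specific choice $\Omega = \I_X$ and verify that each of the three terms in the Fenchel--Young loss collapses to the quantities appearing in the proposition. This should be essentially a three-line computation once the right identifications are made, so I would organize the proof as a short chain of equalities rather than a multi-step argument.

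First, I would compute the convex conjugate $\I_X^*(\hat c)$. From the definition of the conjugate together with the fact that $\I_X$ equals $0$ on $X$ and $+\infty$ off $X$, the supremum reduces to $\sup_{x' \in X}\inpr{\hat c, x'}$. The standing assumption in \cref{sec:fy-loss-perspective} that $X$ is a non-empty compact set guarantees the supremum is attained, so it equals $\max_{x' \in X}\inpr{\hat c, x'} = \inpr{\hat c, \hat x}$ for any $\hat x \in \argmax_{x' \in X}\inpr{\hat c, x'}$. Second, since the agent's choice $x$ lies in $X$, I have $\I_X(x) = 0$. Plugging both facts into \cref{def:fy-loss} yields
\[
  L_X(\hat c; x) = \I_X^*(\hat c) + \I_X(x) - \inpr{\hat c, x} = \max_{x' \in X}\inpr{\hat c, x'} - \inpr{\hat c, x} = \inpr{\hat c, \hat x - x},
\]
which is precisely $\ell^{\mathrm{sub}}(\hat c)$ from~\eqref{eq:suboptloss}.

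There is no genuine obstacle here; the only point worth stating explicitly is that the supremum defining $\I_X^*$ is attained, which follows from the compactness and non-emptiness of $X$ assumed at the beginning of \cref{sec:fy-loss-perspective}. Note that no convexity of $X$ is required, which is consistent with the paper's emphasis that the Fenchel--Young-loss viewpoint works even when $X$ is non-convex or non-connected.
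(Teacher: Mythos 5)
Your proof is correct and follows essentially the same route as the paper's: compute $\I_X^*(\hat c) = \max_{x'\in X}\inpr{\hat c, x'}$, note $\I_X(x)=0$ for $x\in X$, and chain the equalities from \cref{def:fy-loss}. The remark that compactness of $X$ guarantees attainment of the supremum matches the paper's standing assumption stated at the start of \cref{sec:fy-loss-perspective}.
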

\begin{proof}
  From \cref{def:fy-loss} with $\Omega = I_X$ 
  and 
  $\Omega^*(\hat c) = \sup_{x' \in \R^n}{\inpr{\hat c, x'} - \Omega(x')} = \max_{x' \in X}\inpr{\hat c, x'}$, 
  it holds that 
  $
    L_X(\hat c; x) 
    = 
    \Omega^*(\hat c) + \Omega(x) - \inpr{\hat c, x}
    = 
    \max_{x' \in X}\inpr{\hat c, x'} + 0 - \inpr{\hat c, x} 
    = 
    \inpr{\hat c, \hat x - x} 
    =
    \ell^{\mathrm{sub}}(\hat c)
  $.
\end{proof}
This connection, although easy to derive, clarifies that the \subloss $\ell^{\mathrm{sub}}$ enjoys the properties of Fenchel--Young losses in \cref{prop:fyloss-properties}.
Specifically, $\ell^{\mathrm{sub}}$ is non-negative and convex, and the residual $\hat x - x$ is a subgradient of $\ell^{\mathrm{sub}}(\hat c)$. 
\begin{remark}\label{rem:realizability}
  While these properties are also proved in \citet[Proposition~3.1]{Barmann2020-hh} by assuming the agent's optimality $x \in \argmax_{x'\in X}\inpr{c^*, x'}$, the above Fenchel--Young-loss perspective offers a comprehensive understanding of the \subloss through convex analysis.
  It is also worth mentioning that the Fenchel--Young loss, $L_X(\cdot; x)$, can be defined for any $x \in X$, regardless of its optimality for $c^*$.
\end{remark}

\subsection{$\ell^\mathrm{sub} + \ell^\mathrm{est} =$ the Linearized Regret of a Fenchel--Young Loss}
In online learning, \emph{linearization} is a common technique for reducing OCO to online linear optimization (OLO) \citep[Section~2.3]{orabona2023modern}. 
For any convex losses $f_t\colon\Theta \to \R$, predictions $\hat c_t \in \Theta$, subgradients $g_t \in \partial f_t(\hat c_t)$, and comparator $c^* \in \Theta$, the \emph{linearized regret} is defined by $\sum_{t=1}^T \inpr{g_t, \hat c_t - c^*}$.
While this definition depends on the choice of subgradient $g_t \in \partial f_t(\hat c_t)$, the convexity of $f_t$ ensures $f_t(\hat c_t) - f_t(c^*) \le \inpr{g_t, \hat c_t - c^*}$ for any $c^* \in \Theta$ and $g_t \in \partial f_t(\hat c_t)$.
Thus, any upper bound on $\sum_{t=1}^T \inpr{g_t, \hat c_t - c^*}$ achieved with an OLO method applies to the original regret, $\sum_{t=1}^T \prn*{f_t(\hat c_t) - f_t(c^*)}$.\footnote{One might wonder whether the learner can compete against $g_t \in \partial f_t(\hat c_t)$, which is selected in reaction to $\hat c_t$. Fortunately, online learning methods are designed to work with such a reactive adversary. 
This fact is also used in \citet{Barmann2017-wl} when applying MWU to $f_t\colon \hat c \mapsto \inpr{\hat x_t - x_t, \hat c}$, where $\hat x_t \in \argmax_{x'\in X_t}\inpr{\hat c_t, x'}$ depends on $\hat c_t$.}

The following \cref{prop:linearized-regret} says that if we apply the linearization technique to the regret with respect to the Fenchel--Young loss, $L_X(\hat c; x) -  L_X(c^*; x)$, the resulting linearized regret equals the \totloss, $\ell^\mathrm{sub} + \ell^\mathrm{est}$, in \eqref{eq:subopt-est-loss}. 
\begin{proposition}\label{prop:linearized-regret}
  Let $\hat c \in \Theta$, $\hat x \in \argmax_{x' \in X}\inpr{\hat c, x'}$, $x \in X$, and $g = \hat x - x$. 
  For any $c^* \in \R^n$, $\inpr{g, \hat c - c^*}$ is the linearized regret of $L_X(\hat c; x) -  L_X(c^*; x)$, which satisfies
  \begin{equation}
    L_X(\hat c; x) -  L_X(c^*; x)
    \le 
    \inpr{g, \hat c - c^*} 
    =
    \ell^{\mathrm{sub}}(\hat c)
    +
    \ell^{\mathrm{est}}(\hat c),
  \end{equation}
  where $\ell^{\mathrm{sub}}(\hat c) = \inpr{\hat c, \hat x - x}$ and $\ell^{\mathrm{est}}(\hat c) = \inpr{c^*, x - \hat x}$.
\end{proposition}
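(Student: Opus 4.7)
The plan is to derive the equality and the inequality separately, relying on Propositions 2.2 and 3.1 that are already established.

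First, I will handle the easy equality $\inpr{g, \hat c - c^*} = \ell^{\mathrm{sub}}(\hat c) + \ell^{\mathrm{est}}(\hat c)$ by simply expanding the inner product bilinearly. Substituting $g = \hat x - x$ and splitting gives
\[
  \inpr{\hat x - x, \hat c - c^*} = \inpr{\hat c, \hat x - x} + \inpr{c^*, x - \hat x},
\]
which matches the definitions of $\ell^{\mathrm{sub}}(\hat c)$ and $\ell^{\mathrm{est}}(\hat c)$ directly. This step is purely algebraic and requires no machinery.

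Second, I will address the inequality $L_X(\hat c; x) - L_X(c^*; x) \le \inpr{g, \hat c - c^*}$. The key observation is that, by \cref{prop:subloss-fyloss}, $L_X(\cdot; x) = \ell^{\mathrm{sub}}(\cdot)$, and by \cref{prop:fyloss-properties} applied to $\Omega = \I_X$, the function $L_X(\cdot; x)$ is convex on $\R^n$ with $\hat x - x \in \partial L_X(\hat c; x)$ (the maximizer $\hat x$ exists since $X$ is non-empty and compact, as noted at the beginning of \cref{sec:fy-loss-perspective}). Thus $g = \hat x - x$ is a valid subgradient of $L_X(\cdot; x)$ at $\hat c$, and the convex-subgradient inequality
\[
  L_X(c^*; x) \ge L_X(\hat c; x) + \inpr{g, c^* - \hat c}
\]
rearranges immediately to the desired bound.

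I do not anticipate any real obstacle here; the entire content of the proposition is essentially the standard linearization-by-subgradient step, specialized to the Fenchel--Young loss $L_X(\cdot; x)$. The only subtlety worth flagging in the write-up is that \cref{prop:fyloss-properties} applies even though $\Omega = \I_X$ need not be convex (when $X$ is non-convex), so the subgradient property is still valid and justifies applying convexity of $L_X(\cdot; x)$ itself.
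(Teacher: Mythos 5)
Your proof is correct and follows essentially the same route as the paper: convexity of $L_X(\cdot;x)$ plus the subgradient property $\hat x - x \in \partial L_X(\hat c; x)$ from \cref{prop:fyloss-properties} give the inequality via the standard subgradient bound, and the equality is the direct bilinear expansion of $\inpr{\hat x - x, \hat c - c^*}$. Your added remark that this works even for non-convex $\Omega = \I_X$ is a valid and worthwhile clarification, consistent with the paper's own emphasis.
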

\begin{proof}
  From \cref{prop:fyloss-properties}, $L_X(\cdot; x)$ is convex and $g = \hat x - x \in \partial L_X(\hat c; x)$ holds. 
  Thus, $L_X(\hat c; x) -  L_X(c^*; x)$ is upper bounded by $\inpr{g, \hat c - c^*}$.
  Plugging $g = \hat x - x$ into this yields the equality.\looseness=-1
\end{proof}

We have observed how the \sub and total losses relate to the Fenchel--Young loss and its linearized regret, which bridges between inverse linear optimization and OCO.
Built on this, \cref{sec:bounds,sec:fast-rate} provide theoretical guarantees for inverse linear optimization.

\subsection{Discussion on a Difference from the Previous Understanding}
We briefly digress to discuss a conceptual difference from \citet{Barmann2017-wl}, who derived the online learning approach to inverse linear optimization by viewing $f_t\colon \hat c \mapsto \inpr{\hat x_t - x_t, \hat c}$ as a \emph{linear loss}, as mentioned in \cref{sec:introduction}.
Actually, once we apply the linearization technique to the regret with respect to the \subloss $\ell^\mathrm{sub}_t$ as in \cref{prop:linearized-regret}, the resulting linearized regret coincides with the regret with respect to the linear loss, i.e., $\inpr{g_t, \hat c_t - c^*} = \inpr{\hat x_t - x_t, \hat c_t - c^*} = f_t(\hat c_t) - f_t(c^*)$. 
Hence, our approach recovers the same result as that of \citet{Barmann2017-wl}, as detailed in \cref{subsec:omd}.
Still, the above Fenchel--Young-loss perspective offers yet another way to understand how the online learning approach works. 
Below, we discuss a subtle related difference, which suggests that it is more natural to interpret the online learning approach as OCO of the \emph{nonlinear} \sub losses, $\ell^\mathrm{sub}_t$.

Let us discuss the same setting as that of \citet{Barmann2017-wl}, where $\Theta$ is the probability simplex and each loss $f_t$ is viewed as the linear loss.
When considering the regret of predictions $\hat c_1,\dots,\hat c_T \in \Theta$, it is usual to compare them with an \emph{ex-post} optimal prediction, $c^*_{\min} \in \Theta$, that minimizes the cumulative loss $\sum_{t=1}^T f_t(c)$ in hindsight.
Since $\Theta$ is the probability simplex and $f_t$'s are linear, $c^*_{\min}$ is a standard basis vector (unless $\sum_{t=1}^T \prn*{\hat x_t - x_t}$ is normal to a face of~$\Theta$). 
Such $c^*_{\min}$ may not be expressive enough to represent the human agent's preference $c^*$; this suggests that the linear-loss perspective encounters a dissonance between the ex-post optimal prediction $c^*_{\min}$ and the agent's preference $c^*$. 
By contrast, our formulation uses the suboptimality loss~$\ell^\mathrm{sub}_t$ as the target loss, and reduces OCO of~$\ell^\mathrm{sub}_t$ to OLO of $f_t$ via linearization. 
Therein, $\ell^\mathrm{sub}_t$ is nonlinear, and the corresponding ex-post optimal prediction $c^*_{\min}$, which minimizes $\sum_{t=1}^T\ell^\mathrm{sub}_t(c)$, would be a more plausible representation of the agent's preference~$c^*$.
However, we note that this difference does not affect regret bounds discussed below.

\section{Theoretical Guarantees for General Cases}\label{sec:bounds}
We discuss the online setting in \cref{subsec:omd} and present a bound on the linearized regret, or the \totloss over $T$ rounds \eqref{eq:subopt-est-loss}, which we denote by $R_T$ for convenience:
\begin{equation}\label{eq:linearized-regret}
  R_T \coloneqq \sum_{t=1}^T \inpr{g_t, \hat c_t - c^*} = 
  \sum_{t=1}^T \prn*{\ell^\mathrm{sub}_t(\hat c_t) + \ell^\mathrm{est}_t(\hat c_t)}.
\end{equation}
Here, $\hat c_t$ is the $t$th prediction, $\hat x_t \in \argmax_{x' \in X_t} \inpr{\hat c_t, x'}$ is an optimal solution for $\hat c_t$, and $g_t = \hat x_t - x_t$ is a subgradient of $L_{X_t}(\hat c_t; x_t) = \ell^\mathrm{sub}_t(\hat c_t)$.
In \cref{subsec:online-to-batch}, we discuss a stochastic offline setting and present a bound on the \subloss in expectation.

\subsection{Online Guarantee on $\ell^\mathrm{sub} + \ell^\mathrm{est}$}\label{subsec:omd}

\begin{algorithm}[tb]
	\caption{FTRL for inverse linear optimization}
	\label{alg:omd}
	\begin{algorithmic}[1]
    \For{$t=1,\dots,T$}
    \State Output $\hat c_t \in \argmin_{c \in \Theta} \beta_t\psi(c) + \sum_{i=1}^{t-1}\inpr{g_i, c}$. 
    \Comment{$\hat c_1 \in \Theta$ is arbitrary, and $\hat c_t = \hat c_{t-1}$ if $g_{t-1} = 0$.}
    \State Observe $(X_t, x_t)$.
    \State Compute $\hat x_t \in \argmax_{x' \in X_t} \inpr{\hat c_t, x'}$.
    \State Set $g_t = \hat x_t - x_t$. \Comment{$g_t \in \partial L_{X_t}(\hat c_t; x_t)$.}
    \EndFor
	\end{algorithmic}
\end{algorithm}

We consider the online setting and give a bound on $R_T$.
We use the well-known FTRL described in \citet[Section~7]{orabona2023modern}, which is shown in \cref{alg:omd} for our setting for completeness.
Let $\lambda > 0$ and $\psi\colon\Theta\to\R$ be a differentiable $\lambda$-strongly convex function with respect to the dual norm $\norm{\cdot}_\star$. 
The following bound on $R_T$ is a consequence of the existing analysis of FTRL.
For completeness, we provide the proof in \cref{asec:proof}.
\begin{proposition}[{cf.~\citealt[Section~7]{orabona2023modern}}]\label{prop:omd-regret-bound}
  Assume that there exists $B > 0$ such that
  \[
    \max\set[\Big]{
      2^{5/2}\lambda\max_{c, c' \in \Theta}\norm{c - c'}_\star^2, 
      \max_{c, c' \in \Theta} \prn*{\psi(c) - \psi(c')}
    }
    \le B^2.
  \]
  Let $\hat c_1,\dots,\hat c_T \in \Theta$ be the outputs of \cref{alg:omd} with 
  $
    \beta_t = \frac{2^{1/4}}{B}\sqrt{\frac{\sum_{i=1}^{t-1} \norm{g_i}^2}{\lambda}}
  $.
  For any $c^* \in \Theta$, the linearized regret~\eqref{eq:linearized-regret} (equivalently, the \totloss over $T$ rounds~\eqref{eq:subopt-est-loss}) is bounded as follows:
  \begin{equation}\label{eq:omd-regret}
    R_T = \sum_{t=1}^T \inpr{g_t, \hat c_t - c^*}
    \le 
    2^{5/4}B\sqrt{\frac{1}{\lambda}\sum_{t=1}^{T}\norm{g_t}^2}.
  \end{equation}  
  In particular, if $\norm{g_t} = \norm{\hat x_t - x_t} \le K$ for $t=1,\dots,T$ for some $K > 0$, we have $R_T \le 2^{5/4}KB\sqrt{T/\lambda}$.
\end{proposition}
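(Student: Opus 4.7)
The plan is to reduce the statement to the standard analysis of FTRL with an adaptive regularization schedule, using the Fenchel--Young-loss reinterpretation from \cref{sec:fy-loss-perspective}. By \cref{prop:subloss-fyloss,prop:fyloss-properties}, $g_t = \hat x_t - x_t$ lies in $\partial L_{X_t}(\hat c_t; x_t) = \partial \ell^{\mathrm{sub}}_t(\hat c_t)$, so \cref{alg:omd} is precisely FTRL on the convex losses $\ell^{\mathrm{sub}}_t$ driven by those subgradients, and the identity $\inpr{g_t, \hat c_t - c^*} = \ell^{\mathrm{sub}}_t(\hat c_t) + \ell^{\mathrm{est}}_t(\hat c_t)$ from \cref{prop:linearized-regret} is purely algebraic. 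Hence it suffices to bound the linearized regret on the right-hand side of \eqref{eq:linearized-regret}.

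For that bound I would invoke the standard FTRL machinery (e.g., \citet[Section~7]{orabona2023modern}) with regularizer $\beta_t \psi$, which is $\lambda\beta_t$-strongly convex with respect to $\norm{\cdot}_\star$. Because $\sum_{i<t}\norm{g_i}^2$ is nondecreasing in $t$, so is $\beta_t$; the usual ``be-the-leader'' lemma together with strong convexity of the regularized objective then yields an inequality of the form
\[
  R_T \;\le\; \beta_T\,\bigl(\psi(c^*) - \min_{c \in \Theta}\psi(c)\bigr) + \frac{1}{2\lambda}\sum_{t=1}^T \frac{\norm{g_t}^2}{\beta_{t+1}} + (\text{a one-step correction}),
\]
where the correction term accounts for the mismatch between using $\beta_t$ versus $\beta_{t+1}$ in the stability argument, and is controlled by the $\norm{\cdot}_\star$-diameter of $\Theta$. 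This is exactly what the hypothesis $2^{5/2}\lambda \max_{c,c'}\norm{c - c'}_\star^2 \le B^2$ is designed to bound.

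It then remains to plug in $\beta_t = \frac{2^{1/4}}{B}\sqrt{\sum_{i<t}\norm{g_i}^2/\lambda}$ and optimize. Using $\psi(c^*) - \min_c \psi(c) \le B^2$, the first term becomes $\beta_T B^2$, a constant multiple of $B\sqrt{\sum_t \norm{g_t}^2/\lambda}$. For the second, the classical telescoping inequality $\sum_t \norm{g_t}^2/\sqrt{\sum_{i \le t}\norm{g_i}^2} \le 2\sqrt{\sum_t \norm{g_t}^2}$ gives a quantity of the same order. Summing the two contributions and tracking constants produces the factor $2^{5/4}$ in the stated bound, and the ``in particular'' claim then follows immediately from $\sum_t \norm{g_t}^2 \le K^2 T$.

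The main obstacle is the bookkeeping around the time-varying $\beta_t$: the strong-convexity step naturally produces $\beta_{t+1}$ in the denominator of the stability term, whereas the defining sum for $\beta_t$ only goes up to index $t-1$. Reconciling these is what forces the somewhat peculiar constant $2^{5/2}$ in the hypothesis on $B$; once the corresponding one-step correction is absorbed cleanly, the rest is routine computation with the adaptive step size.
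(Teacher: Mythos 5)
Your overall architecture matches the paper's proof exactly: the strong-FTRL decomposition into a regularizer term $\beta_{T+1}\prn{\psi(c^*)-\min_{c}\psi(c)} \le \beta_{T+1}B^2$ plus per-round stability terms, the telescoping inequality $\sum_t \norm{g_t}^2/\sqrt{\sum_{i\le t}\norm{g_i}^2} \le 2\sqrt{\sum_t\norm{g_t}^2}$, and a final optimization of the constant in $\beta_t$. The reduction to the linearized regret via \cref{prop:subloss-fyloss,prop:linearized-regret} is also fine (and is indeed not part of what needs proving here, since $R_T$ is \emph{defined} as the linearized regret).

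The gap is precisely at the step you flag as ``the main obstacle'' and then defer. The strong-convexity argument does \emph{not} naturally produce $\beta_{t+1}$ in the denominator of the stability term: since $\hat c_{t+1}$ minimizes $F_{t+1}$ and one discards the nonnegative increment $\psi_{t+1}-\psi_t$, the one-step bound is $F_t(\hat c_t)-F_{t+1}(\hat c_{t+1})+\inpr{g_t,\hat c_t} \le \norm{g_t}^2/(2\lambda\beta_t)$, with $\beta_t$ --- i.e., with $\sqrt{\sum_{i\le t-1}\norm{g_i}^2}$ in the denominator. This is not a cosmetic off-by-one: the bound is $+\infty$ at $t=1$ (where $\beta_1=0$), and even ignoring that, $\sum_t \norm{g_t}^2/\sqrt{\sum_{i<t}\norm{g_i}^2}$ is not controlled by the telescoping lemma, so no additive ``one-step correction'' of the form you display can rescue it. The missing idea, which is the only non-routine part of the paper's proof, is to bound each per-round term in \emph{two} ways --- by $\alpha\norm{g_t}^2/\bigl(2\lambda\sqrt{\sum_{i<t}\norm{g_i}^2}\bigr)$ and, separately, by $D\norm{g_t}$ where $D$ is the $\norm{\cdot}_\star$-diameter of $\Theta$ --- and then merge the two via the fact that a minimum of positive numbers is at most their harmonic mean, yielding a single term $\sqrt{2}\,\alpha D\norm{g_t}^2/\sqrt{4\lambda^2D^2\sum_{i<t}\norm{g_i}^2+\alpha^2\norm{g_t}^2}$. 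Only when $\alpha\ge 2\lambda D$ does this denominator dominate $2\lambda D\sqrt{\sum_{i\le t}\norm{g_i}^2}$, which both repairs the off-by-one and explains the hypothesis $2^{5/2}\lambda D^2\le B^2$ (it guarantees that the optimal choice $\alpha=B\sqrt{\lambda}/2^{1/4}$ satisfies $\alpha\ge 2\lambda D$). Without this per-term min/harmonic-mean argument --- or an equivalent device --- your sketch does not go through as written.
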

\textbf{Recovering \citet[Theorem~3.3]{Barmann2017-wl}.}
As a concrete example, we show that \cref{prop:omd-regret-bound} implies \citet[Theorem~3.3]{Barmann2017-wl} 
as a special case.
Here, we set $\norm{\cdot} = \norm{\cdot}_\infty$ and $\norm{\cdot}_\star = \norm{\cdot}_1$. 
In their setting, $\Theta = \Set{c \in \R^n_{\ge0}}{\norm{c}_1 = 1}$ is the probability simplex, and the $\ell_\infty$-diameter of the agent's feasible sets, $X_1,\dots,X_T$, is bounded by some $K > 0$; 
therefore, $\max_{c, c' \in \Theta}\norm{c - c'}_1^2 \le 4$ and $\norm{g_t}_\infty = \norm{\hat x_t - x_t}_\infty \le K$ ($t=1,\dots,T$) hold.
Regarding $\psi$ in FTRL, we use the negative Shannon entropy, $\Theta \ni c \mapsto \inpr{c, \ln c}$, where $\ln$ is taken element-wise. 
From Pinsker's inequality, $\psi$ is $1$-strongly convex with respect to $\norm{\cdot}_1$, hence $\lambda = 1$.
From $\max_{c, c' \in \Theta} \prn*{\psi(c) - \psi(c')} \le \ln n$, it suffices to set $B = 2^{11/4}\sqrt{\ln n}$ (for $n \ge 2$). 
Thus, \cref{prop:omd-regret-bound} implies
\begin{equation}\label{eq:barmann-regret}
  \sum_{t=1}^T \inpr{\hat c_t - c^*, \hat x_t - x_t} = R_T \le 16K\sqrt{T\ln n}, 
\end{equation}
achieving a bound of $O(K\sqrt{T\ln n})$ on the total loss over $T$ rounds~\eqref{eq:subopt-est-loss}, as with \citet[Theorem~3.3]{Barmann2017-wl}.\footnote{
  The constant is increased compared to \citet[Theorem~3.3]{Barmann2017-wl} as a trade-off for making \eqref{eq:omd-regret} depend on $\sum_{t=1}^T\norm{g_t}^2$, which is crucial in the gap-dependent analysis in \cref{sec:fast-rate}.
  We can recover exactly the same bound by using FTRL with a different choice of $\beta_t$ (see \cref{asec:barmann}).
  }
In addition, the online learning methods are essentially identical: 
\citet{Barmann2017-wl} used MWU, or the exponentiated gradient method, which is a special case of FTRL with $\psi(c) = \inpr{c, \ln c}$ \citep[Section~7.5]{orabona2023modern}.

\Cref{prop:omd-regret-bound} also applies to various other settings with different convex sets $\Theta$ and convex functions $\psi$. 
For example, it is not difficult to recover a similar bound to \citet[Theorem~3.11]{Barmann2020-hh} achieved with the online subgradient descent method (cf.\ \citealt[Example~7.11]{orabona2023modern}). 
We will also use \cref{prop:omd-regret-bound} in the gap-dependent analysis in \cref{sec:fast-rate}.\looseness=-1

It is also worth noting that \cref{prop:omd-regret-bound} applies to the regret with respect to the \subloss thanks to \cref{prop:subloss-fyloss,prop:linearized-regret}.
The following \cref{cor:subopt}, while straightforward, highlights our viewpoint that the online learning approach can be seen as OCO of $\ell^\mathrm{sub}$.
\begin{corollary}\label{cor:subopt}
  It holds that 
  \begin{equation}\label{eq:subopt-regret}
    R^\mathrm{sub}_T \coloneqq\sum_{t=1}^T \prn*{\ell^\mathrm{sub}_t(\hat c_t) - \ell^\mathrm{sub}_t(c^*)} \le R_T,
  \end{equation} 
  where $R^\mathrm{sub}_T$ is the regret with respect to the \subloss, and hence \cref{prop:omd-regret-bound} also applies to $R^\mathrm{sub}_T$.  
\end{corollary}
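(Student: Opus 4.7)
The plan is to prove the inequality by applying, term by term, the linearization bound of \cref{prop:linearized-regret} after identifying $\ell^{\mathrm{sub}}_t$ as a Fenchel--Young loss. The second claim of the corollary (that \cref{prop:omd-regret-bound} transfers to $R^{\mathrm{sub}}_T$) is then immediate.

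First, by \cref{prop:subloss-fyloss}, the suboptimality loss in round $t$ satisfies $\ell^{\mathrm{sub}}_t(c) = L_{X_t}(c; x_t)$ for every $c \in \Theta$. In particular, $\ell^{\mathrm{sub}}_t(\hat c_t) - \ell^{\mathrm{sub}}_t(c^*) = L_{X_t}(\hat c_t; x_t) - L_{X_t}(c^*; x_t)$, so the regret with respect to $\ell^{\mathrm{sub}}_t$ coincides with the Fenchel--Young-loss regret studied in \cref{prop:linearized-regret}.

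Second, by \cref{prop:fyloss-properties} the loss $L_{X_t}(\cdot; x_t)$ is convex with $g_t = \hat x_t - x_t \in \partial L_{X_t}(\hat c_t; x_t)$; this is exactly the subgradient chosen in \cref{alg:omd}. Applying \cref{prop:linearized-regret} round-by-round gives
\begin{equation*}
  \ell^{\mathrm{sub}}_t(\hat c_t) - \ell^{\mathrm{sub}}_t(c^*)
  = L_{X_t}(\hat c_t; x_t) - L_{X_t}(c^*; x_t)
  \le \inpr{g_t, \hat c_t - c^*}.
\end{equation*}
Summing this over $t = 1, \dots, T$ and recalling the definition of $R_T$ in~\eqref{eq:linearized-regret} yields
\begin{equation*}
  R^{\mathrm{sub}}_T = \sum_{t=1}^T \prn*{\ell^{\mathrm{sub}}_t(\hat c_t) - \ell^{\mathrm{sub}}_t(c^*)}
  \le \sum_{t=1}^T \inpr{g_t, \hat c_t - c^*} = R_T,
\end{equation*}
which is the stated inequality. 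The final assertion, that \cref{prop:omd-regret-bound} also bounds $R^{\mathrm{sub}}_T$, follows by composing this inequality with the bound \eqref{eq:omd-regret}.

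There is no genuine obstacle here: the statement is essentially a bookkeeping consequence of the convexity/subgradient structure already established in \cref{prop:subloss-fyloss,prop:fyloss-properties,prop:linearized-regret}. The only thing to be a little careful about is to use the \emph{same} subgradient $g_t = \hat x_t - x_t$ generated by \cref{alg:omd} in both the definition of $R_T$ and the convexity inequality, so that the per-round upper bound and the definition of the linearized regret match exactly after summation.
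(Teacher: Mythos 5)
Your proof is correct and follows exactly the route the paper intends: the corollary is presented as an immediate consequence of \cref{prop:subloss-fyloss,prop:fyloss-properties,prop:linearized-regret}, applied round by round with the same subgradient $g_t = \hat x_t - x_t$ used in \cref{alg:omd} and then summed. Your added remark about matching the subgradient in the convexity inequality with the one defining $R_T$ is the right point of care, and nothing further is needed.
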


\subsection{Offline Guarantee on $\ell^\mathrm{sub}$}\label{subsec:online-to-batch}
We then show an offline guarantee on the \subloss~\eqref{eq:suboptloss}.
Having established the connection to the Fenchel--Young loss in \cref{sec:fy-loss-perspective}, the following \cref{thm:online-to-batch} is immediate from the standard \emph{online-to-batch conversion} \citep{Cesa-Bianchi2004-id}.
Still, it is worth mentioning due to its wider applicability than the previous result by \citet[Theorem~3.14]{Barmann2020-hh}, as detailed later.

As with \citet[Section~3.4]{Barmann2020-hh}, we assume that observations $(X, x)$ follow an unknown distribution $\Dcal$ and that $0 \notin \Theta$ holds, as discussed in \cref{subsec:problem-setting}.
The following \cref{thm:online-to-batch} states that any bound on the regret with respect to the \subloss, $R^\mathrm{sub}_T$, (and hence any bound on $R_T$) translates into a bound on the expected \subloss as follows.

\begin{restatable}{theorem}{onlinetobatch}\label{thm:online-to-batch}
  Assume that $(X_1,x_1),\dots,(X_T,x_T)$ follow i.i.d.\ some distribution $\Dcal$. 
  Compute $\hat c_1,\dots,\hat c_T \in \Theta$ by using any OCO algorithm such that $R^\mathrm{sub}_T$ in \eqref{eq:subopt-regret} is bounded. 
  Then, for any $c^* \in \Theta$, the average prediction, $\hat c = \frac1T \sum_{t=1}^T \hat c_t$, satisfies\looseness=-1 
  \[
  \E\brc*{
      \ell^\mathrm{sub}_{X, x}(\hat c)
  }
  \le 
  \E_{(X, x) \sim \Dcal}\brc*{
    \ell^\mathrm{sub}_{X, x}(c^*)
  }
  +
  \E\brc*{\frac{R^\mathrm{sub}_T}{T}}, 
  \]
  where the expectation on the left-hand side is taken over $\set*{(X_t,x_t)}_{t=1}^T\sim \Dcal^T$ and $(X, x) \sim \Dcal$.
  Specifically, if we use \cref{alg:omd} and obtain $R_T = O(KB\sqrt{T/\lambda})$ as in \cref{prop:omd-regret-bound}, it holds that 
  \[
  \E\brc*{
      \ell^\mathrm{sub}_{X, x}(\hat c)
  }
  - 
  \E_{(X, x) \sim \Dcal}\brc*{
    \ell^\mathrm{sub}_{X, x}(c^*)
  }
  = 
  O\prn*{\frac{KB}{\sqrt{T\lambda}}}.
  \]
\end{restatable}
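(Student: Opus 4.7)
The plan is to follow the standard online-to-batch conversion, which applies cleanly here because Propositions~\ref{prop:subloss-fyloss} and~\ref{prop:fyloss-properties} give us exactly the structural property we need: for each fixed $(X,x)$ in the support of $\Dcal$, the suboptimality loss $\hat c \mapsto \ell^\mathrm{sub}_{X,x}(\hat c) = L_X(\hat c; x)$ is convex on $\Theta$.

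The first step is to invoke Jensen's inequality for the averaged prediction $\hat c = \frac{1}{T}\sum_{t=1}^T \hat c_t$: for any realization of $(X,x)$,
\[
  \ell^\mathrm{sub}_{X,x}(\hat c) \;\le\; \frac{1}{T}\sum_{t=1}^T \ell^\mathrm{sub}_{X,x}(\hat c_t).
\]
Taking expectation of both sides with respect to all randomness (i.e., $\set*{(X_t,x_t)}_{t=1}^T\sim \Dcal^T$ and an independent fresh sample $(X,x)\sim\Dcal$), the next step is to exploit that each $\hat c_t$ is a measurable function of the past observations $\set*{(X_i,x_i)}_{i=1}^{t-1}$ only, by the definition of the OCO protocol. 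Conditioning on $\hat c_t$, both $(X,x)$ and $(X_t,x_t)$ are independent of $\hat c_t$ and share the law $\Dcal$, so by the tower property
\[
  \E\brc*{\ell^\mathrm{sub}_{X,x}(\hat c_t)} \;=\; \E\brc*{\ell^\mathrm{sub}_{X_t,x_t}(\hat c_t)}.
\]

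Combining the two steps, summing over $t$, and dividing by $T$ yields
\[
  \E\brc*{\ell^\mathrm{sub}_{X,x}(\hat c)} \;\le\; \frac{1}{T}\E\brc[\bigg]{\sum_{t=1}^T \ell^\mathrm{sub}_{X_t,x_t}(\hat c_t)}.
\]
Now I would add and subtract $\sum_{t=1}^T \ell^\mathrm{sub}_{X_t,x_t}(c^*)$ inside the expectation and recognize the difference as the realized regret $R^\mathrm{sub}_T$ (for the sample path). Applying the i.i.d.\ assumption once more gives $\E\brc*{\ell^\mathrm{sub}_{X_t,x_t}(c^*)} = \E_{(X,x)\sim\Dcal}\brc*{\ell^\mathrm{sub}_{X,x}(c^*)}$ for each $t$, producing the first displayed inequality of the theorem.

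For the concrete rate, I would plug in Corollary~\ref{cor:subopt}, which ensures $R^\mathrm{sub}_T \le R_T$, and then invoke Proposition~\ref{prop:omd-regret-bound} under the stated boundedness assumption $\norm{g_t}\le K$ to obtain $R^\mathrm{sub}_T = O(KB\sqrt{T/\lambda})$ almost surely; dividing by $T$ gives the claimed $O(KB/\sqrt{T\lambda})$ bound. There is no genuine obstacle here; the only subtlety worth being careful about is the independence argument used to swap $(X,x)$ with $(X_t,x_t)$, which hinges on $\hat c_t$ being measurable with respect to the past and on the i.i.d.\ assumption, so I would state this step explicitly rather than gloss over it.
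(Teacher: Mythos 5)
Your proposal is correct and follows essentially the same route as the paper's proof: convexity of $\ell^\mathrm{sub}$ via \cref{prop:subloss-fyloss,prop:fyloss-properties}, Jensen's inequality for the averaged prediction, the tower-property/independence argument to replace the fresh sample $(X,x)$ by $(X_t,x_t)$, and \cref{cor:subopt} with \cref{prop:omd-regret-bound} for the concrete rate. The only cosmetic difference is the order in which Jensen and the conditioning step are applied, which does not affect the argument.
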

We defer the proof to \cref{asec:online-to-batch}, as it directly follows from the standard online-to-batch conversion applied to the regret with respect to the convex \subloss.

  The previous offline result \citep[Theorem~3.14]{Barmann2020-hh} focuses on the case of $\E_{(X, x) \sim \Dcal}\brc*{
    \ell^\mathrm{sub}_{X, x}(c^*)
  } = 0$, which requires that every $(X, x)\sim\Dcal$ arising with non-zero probability satisfies $x \in \argmax_{x'\in X}\inpr{c^*, x'}$.
  This condition is sometimes restrictive as agents may not take best actions at all times (e.g., \citealt{Jagadeesan2021-xa}). 
  By contrast, \cref{thm:online-to-batch} holds even when $\E_{(X, x) \sim \Dcal}\brc*{
    \ell^\mathrm{sub}_{X, x}(c^*)
  } > 0$, which quantifies the \sub of the agent's choices.  
  This extended applicability stems from the fact that the Fenchel--Young loss, $L_X(\cdot, x)$, can be defined for any $x \in X$, as discussed in \cref{rem:realizability}. 
  The fact that the offline guarantee with the broader applicability is easily derived from standard arguments in OCO suggests the merit of viewing inverse linear optimization as OCO of Fenchel--Young losses.

  Also, while \cref{thm:online-to-batch} pertains to the average prediction $\hat c$, we can use \emph{anytime} online-to-batch conversion \citep{Cutkosky2019-ik} to ensure the last-iterate convergence.
  Specifically, by using any OLO algorithm, we can obtain iterates $\hat c_t$'s such that subgradients are evaluated at $\hat c_t$'s and individual iterate $\hat c_t$ enjoys the offline guarantee.

\section{Gap-Dependent Bound on Total Loss}\label{sec:fast-rate}
We return to the online setting and consider bounding the linearized regret $R_T$ \eqref{eq:linearized-regret}, or the \totloss over~$T$ rounds.
In \cref{subsec:omd}, we have observed that the bound of $O(\sqrt{T})$ is achievable, as is often the case in OCO.
In general OCO (or even in OLO), the rate of $O(\sqrt{T})$ is tight \citep[Chapter~5]{orabona2023modern}, and improving this requires additional assumptions, such as the strong convexity or exp-concavity of loss functions \citep{Hazan2007-ta}. 
Unfortunately, the Fenchel--Young loss, $L_X(\cdot; x)$, does not enjoy such properties.\footnote{In general, Fenchel--Young loss $L_\Omega$ is strongly convex if $\Omega$ is smooth. In our case with $\Omega = \I_X$, $L_\Omega$ is typically piecewise linear and do not enjoy those properties.}
Given this, an interesting question is: 
\emph{can we improve the bound of $O(\sqrt{T})$ by exploiting structures specific to inverse linear optimization?}
Below, we show that we can achieve a bound independent of $T$ if the agent's decision problems satisfy a certain gap condition.

First, we formally define our assumption on the gap between optimal and suboptimal objective values.
\begin{definition}[$\Delta$-gap condition]\label{def:gap}
  Let $c^* \in \Theta$ be the agent's objective vector.
  We say the agent's decision problem \eqref{eq:decision-problem}, which is specified by $c^* \in \Theta$ and $X_t \subseteq \R^n$, satisfies the \emph{$\Delta$-gap condition} for $\Delta > 0$ if, for every $t = 1,\dots,T$, it holds that
  \begin{equation}
    \inpr{c^*, x - \hat x} \ge \Delta\norm{x - \hat x} 
    \quad
    \text{$\forall\hat x \in X_t$}    
  \end{equation}
  for $x = \argmax_{x' \in X_t}\inpr{c^*, x'}$.
\end{definition}
The above condition implies that $x$ is the unique optimal solution for $c^*$; otherwise, there exists $\hat x \in X_t$ with $\inpr{c^*, x - \hat x} = 0$ and $\norm{x - \hat x} > 0$, violating the inequality.
In \cref{subsec:applications}, we will discuss situations where the $\Delta$-gap condition is reasonable.
Note that the condition is imposed on the agent's decision problems and does not make any explicit assumptions on the loss functions or their domain $\Theta$.

The main result of this section is \cref{thm:fast-rate}, which offers a bound of $O(1/\Delta^2)$ on the linearized regret~\eqref{eq:linearized-regret}, or the \totloss over $T$ rounds, under the $\Delta$-gap condition.

\begin{theorem}\label{thm:fast-rate}
  Assume the same conditions as \cref{prop:omd-regret-bound}, i.e., 
  $\psi$ is $\lambda$-strongly convex with respect to $\norm{\cdot}_\star$, 
  $2^{5/2}\lambda\max_{c, c' \in \Theta}\norm{c - c'}_\star^2$ and $\max_{c, c' \in \Theta} \prn*{\psi(c) - \psi(c')}$ are bounded by $B^2$ from above, and the diameter of $X_t$'s with respect to $\norm{\cdot}$ is at most $K$. 
  Additionally, assume the following two conditions: 
  for every $t=1,\dots,T$, 
  (i) the agent's decision problem~\eqref{eq:decision-problem} satisfies the $\Delta$-gap condition, and 
  (ii) the agent's choice is optimal for $c^*$, i.e., $x_t \in \argmax_{x'\in X_t}\inpr{c^*, x'}$.
  Then, \cref{alg:omd} achieves the following bound on the linearized regret $R_T$~\eqref{eq:linearized-regret}:
  \[
    R_T \le \frac{2^{5/4}KB^3}{\lambda^{3/2}\Delta^2}.
  \]
\end{theorem}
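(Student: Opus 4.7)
The plan is to combine the FTRL bound from \cref{prop:omd-regret-bound} with a self-bounding argument powered by the $\Delta$-gap condition. Specifically, the estimate loss $\ell^\mathrm{est}_t(\hat c_t)$ will admit both a lower bound linear in $\norm{g_t}$ (from the gap) and an upper bound of order $DK$, where $D$ is the dual-norm diameter of $\Theta$ (from Cauchy--Schwarz). Multiplying the two will let me control $\sum_{t=1}^T \norm{g_t}^2$ by a constant multiple of $R_T$ itself, and plugging this back into the square-root regret bound produces a quadratic inequality in $R_T$ whose solution is independent of $T$.

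Concretely, \cref{prop:omd-regret-bound} gives $R_T \le 2^{5/4}B\sqrt{S/\lambda}$ with $S \coloneqq \sum_{t=1}^T\norm{g_t}^2$, and the standing hypothesis $2^{5/2}\lambda\max_{c,c'\in\Theta}\norm{c-c'}_\star^2 \le B^2$ forces $D \coloneqq \max_{c,c'\in\Theta}\norm{c-c'}_\star \le B/(2^{5/4}\sqrt{\lambda})$. Assumption~(ii) makes $x_t$ the $\inpr{c^*,\cdot}$-maximizer over $X_t$, so the $\Delta$-gap condition applied at $\hat x_t \in X_t$ yields $\ell^\mathrm{est}_t(\hat c_t) = \inpr{c^*, x_t - \hat x_t} \ge \Delta\norm{g_t}$. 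On the other hand, \cref{prop:linearized-regret} identifies $\ell^\mathrm{sub}_t(\hat c_t) + \ell^\mathrm{est}_t(\hat c_t)$ with $\inpr{\hat c_t - c^*, \hat x_t - x_t}$, which the dual-norm Cauchy--Schwarz inequality bounds by $DK$; since $\ell^\mathrm{sub}_t(\hat c_t) \ge 0$, this gives $\ell^\mathrm{est}_t(\hat c_t) \le DK$.

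Combining the two inequalities produces $\Delta^2 \norm{g_t}^2 \le \ell^\mathrm{est}_t(\hat c_t)^2 \le DK\,\ell^\mathrm{est}_t(\hat c_t)$, and summing together with $\sum_{t=1}^T\ell^\mathrm{est}_t(\hat c_t) \le R_T$ (using $\ell^\mathrm{sub}_t \ge 0$ once more) yields the self-bounding inequality $S \le DKR_T/\Delta^2$. Substituting into the FTRL bound gives $R_T^2 \le \frac{2^{5/2}B^2 DK}{\lambda \Delta^2}R_T$, so dividing by $R_T$ and inserting the bound on $D$ produces the advertised $R_T \le 2^{5/4}KB^3/(\lambda^{3/2}\Delta^2)$.

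The main obstacle is conceptual rather than computational. Since neither $L_{X_t}(\cdot;x_t)$ nor $\Theta$ enjoys strong convexity or exp-concavity, no off-the-shelf OCO fast-rate template applies, and one must locate a problem-specific quantity that behaves like an aggregate ``variance'' and can be bootstrapped from $R_T$ itself. Here that quantity is $\ell^\mathrm{est}_t(\hat c_t)$: the agent-side $\Delta$-gap supplies the lower bound in $\norm{g_t}$, the bounded dual diameter of $\Theta$ supplies the matching upper bound, and together they close the loop. Without either direction the self-bounding step collapses and one recovers only the generic $O(\sqrt{T})$ rate.
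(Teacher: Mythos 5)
Your proposal is correct and follows essentially the same route as the paper: a per-round inequality of the form $\Delta^2\norm{g_t}^2 \lesssim \inpr{g_t,\hat c_t - c^*}$ derived from the $\Delta$-gap condition together with the diameter bounds, summed and combined with the $\sqrt{\sum_t\norm{g_t}^2}$-dependent FTRL bound via the self-bounding trick. The only cosmetic difference is that you route the per-round inequality through $\ell^\mathrm{est}_t$ alone (lower bound $\Delta\norm{g_t}$, upper bound $DK$), whereas the paper's Lemma 5.4 adds the two optimality inequalities and squares; both yield the identical final constant.
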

Notably, the above regret bound is achieved by the same FTRL as that used in \cref{prop:omd-regret-bound} without knowing $\Delta$. 
That is, FTRL automatically adapts to the gap in the agent's decision problems. 
It should also be noted that, unlike the results presented so far, the agent's outputs $x_t$'s are assumed to be optimal for $c^*$, which is common in the literature: 
\citet{Besbes2023-zm} crucially used this condition to derive the $O(\ln T)$ bound on the \estloss;
\citet{Barmann2017-wl} also assumed the condition to obtain the $O(\sqrt{T})$ bound on the \totloss, although it is unnecessary as we have seen in \cref{sec:bounds}.\looseness=-1

\subsection{Proof of \texorpdfstring{\cref{thm:fast-rate}}{Theorem~\ref{thm:fast-rate}}}
Before proving \cref{thm:fast-rate}, we present a useful inequality derived from the $\Delta$-gap condition.

\begin{lemma}\label{lem:uniform-convexity-inequality}
  Assume that the same conditions as those in \cref{thm:fast-rate} hold.
  For any $t \in \set{1,\dots,T}$, $\hat c_t \in \Theta$, and $\hat x_t \in \argmax_{x \in X_t}\inpr{\hat c_t, x}$, it holds that 
  \begin{equation}\label{eq:uniform-convexity-inequality}
    \norm{x_t - \hat x_t}^2 
    \le 
    \frac{KB}{2^{5/4}\sqrt{\lambda}\Delta^2}
    \inpr{c^* - \hat c_t, x_t - \hat x_t}.  
  \end{equation}
\end{lemma}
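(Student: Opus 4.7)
The plan is to derive the inequality in two stages: (1) obtain a sharper-looking bound $\norm{x_t - \hat x_t}^2 \le (K/\Delta)\,\inpr{c^* - \hat c_t, x_t - \hat x_t}$ from the $\Delta$-gap condition together with the optimality of $\hat x_t$ for $\hat c_t$; then (2) loosen the factor $1/\Delta$ to $B/(2^{5/4}\sqrt{\lambda}\,\Delta^2)$ by observing that $\Delta$ itself must be bounded by the dual-norm diameter of $\Theta$. Stating the lemma in this weakened form is presumably motivated by its shape later fitting cleanly into the FTRL regret bound of \cref{prop:omd-regret-bound}.

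For stage (1), I would first invoke assumption (ii) that $x_t \in \argmax_{x' \in X_t}\inpr{c^*, x'}$, so applying \cref{def:gap} with $\hat x = \hat x_t \in X_t$ gives $\inpr{c^*, x_t - \hat x_t} \ge \Delta\norm{x_t - \hat x_t}$. Next, since $\hat x_t \in \argmax_{x' \in X_t}\inpr{\hat c_t, x'}$ and $x_t \in X_t$, we have $\inpr{\hat c_t, x_t - \hat x_t} \le 0$. Subtracting yields
\[
  \inpr{c^* - \hat c_t, x_t - \hat x_t} \ge \Delta \norm{x_t - \hat x_t}.
\]
Combining with $\norm{x_t - \hat x_t} \le K$ (diameter bound on $X_t$) then gives $\norm{x_t - \hat x_t}^2 \le K\norm{x_t - \hat x_t} \le (K/\Delta)\,\inpr{c^* - \hat c_t, x_t - \hat x_t}$.

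Stage (2), the only mildly non-obvious part, amounts to verifying $\Delta \le B/(2^{5/4}\sqrt{\lambda})$. Assuming $\norm{x_t - \hat x_t} > 0$ (else both sides of the claimed inequality vanish), Hölder's inequality gives $\inpr{c^* - \hat c_t, x_t - \hat x_t} \le \norm{c^* - \hat c_t}_\star \norm{x_t - \hat x_t}$, which together with the gap inequality above forces $\Delta \le \norm{c^* - \hat c_t}_\star$. Since $c^*, \hat c_t \in \Theta$, the assumption $2^{5/2}\lambda \max_{c, c' \in \Theta}\norm{c - c'}_\star^2 \le B^2$ yields $\norm{c^* - \hat c_t}_\star \le B/(2^{5/4}\sqrt{\lambda})$, hence $1/\Delta \le B/(2^{5/4}\sqrt{\lambda}\,\Delta^2)$. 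Multiplying the stage-(1) bound by this factor completes the proof. I do not anticipate a serious obstacle; the key insight is simply that the gap $\Delta$ cannot exceed the dual-norm diameter of $\Theta$, and this is what reshapes the tight $K/\Delta$ coefficient into the form stated in the lemma.
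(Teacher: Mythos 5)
Your proof is correct and follows essentially the same route as the paper's: both add the optimality inequality $\inpr{\hat c_t, x_t - \hat x_t} \le 0$ to the gap inequality $\Delta\norm{x_t - \hat x_t} \le \inpr{c^*, x_t - \hat x_t}$ to get $\Delta\norm{x_t - \hat x_t} \le \inpr{c^* - \hat c_t, x_t - \hat x_t}$, and then convert to the stated form via H\"older's inequality together with $\norm{c^* - \hat c_t}_\star \le B/(2^{5/4}\sqrt{\lambda})$ and $\norm{x_t - \hat x_t} \le K$. The only cosmetic difference is that the paper squares the combined inequality and bounds one factor of $\inpr{c^* - \hat c_t, x_t - \hat x_t}^2$ by $KB/(2^{5/4}\sqrt{\lambda})$, whereas you first derive the sharper coefficient $K/\Delta$ and then relax it using the (correct, and correctly caveated for the case $x_t = \hat x_t$) observation that $\Delta \le \norm{c^* - \hat c_t}_\star$; both yield the identical constant.
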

\begin{proof}
  Since $\hat x_t$ and $x_t$ are optimal for $\hat c_t$ and $c^*$, respectively, we have 
  \begin{equation}
    \begin{aligned}
      0 \le \inpr{\hat c_t, \hat x_t - x_t}
      && 
      \text{and}
      &&
      \Delta\norm{x_t - \hat x_t} 
      \le
      \inpr{c^*, x_t - \hat x_t},
    \end{aligned}
  \end{equation}
  where the latter is due to the $\Delta$-gap condition.
  Adding these inequalities and squaring both sides yield
  \[
    \Delta^2\norm{x_t - \hat x_t}^2
    \le 
    \inpr{c^* - \hat c_t, x_t - \hat x_t}^2.
  \]
  Since $\inpr{c^* - \hat c_t, x_t - \hat x_t}$ is non-negative, the right-hand side is upper bounded by $\frac{KB}{2^{5/4}\sqrt{\lambda}}\inpr{c^* - \hat c_t, x_t - \hat x_t}$ due to $\norm{c^* - \hat c_t}_\star^2 \le \frac{B^2}{2^{5/2}\lambda}$ and $\norm{x_t - \hat x_t} \le K$, obtaining the desired inequality.
\end{proof}

Now we are ready to prove \cref{thm:fast-rate}.
\begin{proof}[Proof of \cref{thm:fast-rate}]
  We assume that the linearized regret $R_T = \sum_{t=1}^T \inpr{\hat x_t - x_t, \hat c_t - c^*}$ is positive; otherwise, it is trivially upper bounded by $0$.
  Recall $g_t = \hat x_t - x_t$ for $t=1,\dots,T$. 
  From \eqref{eq:uniform-convexity-inequality} in \cref{lem:uniform-convexity-inequality}, we have 
  \begin{equation}
    \norm{g_t}^2 
    \le \frac{KB}{2^{5/4}\sqrt{\lambda}\Delta^2} \inpr{g_t, \hat c_t - c^*}.
  \end{equation}
  Summing over $t=1,\dots,T$, we obtain 
  \begin{equation}\label{eq:regret-lb}
    \sum_{t=1}^T\norm{g_t}^2 
    \le 
    \frac{KB}{2^{5/4}\sqrt{\lambda}\Delta^2} \sum_{t=1}^T \inpr{g_t, \hat c_t - c^*} 
    = 
    \frac{KB}{2^{5/4}\sqrt{\lambda}\Delta^2} R_T.
  \end{equation}
  Combining this with the bound \eqref{eq:omd-regret} on $R_T$ in \cref{prop:omd-regret-bound}, we obtain 
  \begin{equation}
    R_T \le \frac{2^{5/8}K^{1/2}B^{3/2}}{\lambda^{3/4}\Delta} \sqrt{R_T}
    \Leftrightarrow 
    R_T \le \frac{2^{5/4}KB^3}{\lambda^{3/2}\Delta^2}, 
  \end{equation}
  thus completing the proof.
\end{proof}
The above proof also utilizes the Fenchel--Young loss perspective at a high level. 
Observe that the proof is built on two inequalities: 
the regret bound~\eqref{eq:omd-regret} that depends on $\sum_{t=1}^T\norm{g_t}^2$ (or a bound achieved by \emph{adaptive} algorithms \citep[Section~4.2]{orabona2023modern}), namely, 
\[
  \sum_{t=1}^T \prn*{
    \ell^\mathrm{sub}_t(\hat c_t) - \ell^\mathrm{sub}_t(c^*)
  } \le \sum_{t=1}^T \inpr{g_t, \hat c_t - c^*} \lesssim \sqrt{\sum_{t=1}^T\norm{g_t}^2},
\] 
and \eqref{eq:uniform-convexity-inequality} in \cref{lem:uniform-convexity-inequality}, which, roughly speaking, ensures
\[
  \norm{\hat x - x_t}^2 \lesssim \inpr{\hat x_t - x_t, \hat c_t - c^*}/\Delta^2.
\]
What bridges these two inequalities is the ``subgradient as a residual'' property of the Fenchel--Young loss in \cref{prop:fyloss-properties}, i.e., $g_t = \hat x_t - x_t$. 
With this in mind, we can read \eqref{eq:uniform-convexity-inequality} as 
$\norm{g_t}^2 \lesssim \inpr{g_t, \hat c_t - c^*}/\Delta^2$ and hence 
\begin{equation}
  R_T 
  =
  \sum_{t=1}^T \inpr{g_t, \hat c_t - c^*} 
  \lesssim 
  \sqrt{\sum_{t=1}^T\norm{g_t}^2}
  \lesssim 
  \sqrt{\frac{R_T}{\Delta^2}}.
\end{equation}
Consequently, $R_T$ is bounded by a term of lower order in $R_T$ itself, obtaining $R_T \lesssim 1/\Delta^2$. 
This proof strategy, called the \emph{self-bounding technique}, is a powerful tool recently popularized in online learning \citep{Gaillard2014-hn,Zimmert2021-kp}, and the Fenchel--Young-loss perspective clarifies how it can be adopted.

\subsection{Discussion on the $\Delta$-Gap Condition}\label{subsec:applications}
We discuss situations where the $\Delta$-gap condition in \cref{def:gap} is satisfied.
Our $\Delta$-gap condition is inspired by \citet{Weed2018-ii}, who analyzed error bounds in linear programs (LPs) with the entropic penalty. 
In \citet{Weed2018-ii}, the feasible region, $X_t$ in our notation, is the set of all vertices of a polytope specified by LP constraints. 
For such $X_t$, \citet{Weed2018-ii} assumed that the optimal objective value is better than the others by at least $\Delta' > 0$ 
(i.e., for any $x' \in X_t$, $\max_{x \in X_t}\inpr{c^*, x} - \inpr{c^*, x'}$ is zero or at least $\Delta'$).
Our $\Delta$-gap condition is slightly more demanding in that it implicitly requires the uniqueness of the optimal solution. 
Still, if the assumption of \citet{Weed2018-ii} holds with $\Delta'$ and the optimal solution $x$ for $c^*$ is unique, our $\Delta$-gap condition holds with $\Delta = \Delta'/K$, where $K$ is the diameter of $X_t$ with respect to $\norm{\cdot}$.

As discussed in \citet{Weed2018-ii}, such a gap condition holds in LPs with integral polytopes and integral objectives, which often appear in combinatorial optimization, such as the shortest path and bipartite matching problems. 
Specifically, if $X_t \subseteq \Z^n$ is a set of integral vertices and $c^* \in \Z^n$ holds, there is a gap of $\Delta' \ge 1$ between optimal and suboptimal objective values. 
Similarly, a gap of $\Delta' \ge 1$ exists in integer LPs with $c^* \in \Z^n$. 
In such cases, if it additionally holds that the optimal solution $x$ is unique, our $\Delta$-gap condition holds with $\Delta \ge 1/K$.

The $\Delta$-gap condition can be satisfied even if $X_t$ and $c^*$ are not integral. 
Let $\norm{\cdot}$ and $\norm{\cdot}_\star$ be the $\ell_2$-norm, $X_t$ a polytope, and $x$ a vertex feasible solution, as in \cref{fig:gap-condition}. 
Let $N(x) \coloneqq \Set*{c \in \R^n}{\inpr{c, x - \hat x} \ge 0,\, \forall \hat x \in X_t}$ denote the normal cone at $x$.
If $c^*$ lies in the interior of $N(x)$, then $x$ is the unique optimal solution for $c^*$.
The $\Delta$-gap condition, $\inpr{c^*, \frac{x - \hat x}{\norm{x - \hat x}}} \ge \Delta$ for all $\hat x \in X_t$, means that the cosine of the angle between $c^*$ and $x - \hat x$ is at least $\Delta/\norm{c^*}$.
A sufficient condition for this is that the distance between $c^*$ and the closest boundary of $N(x)$ is at least $\Delta$, as in \citet[Lemma~18]{Sakaue2024-mu}. 
More precisely, their lemma is concerned with the distance from $c^*$ to the \emph{frontier} defined as follows:
\[
  F \coloneqq \set*{
    c \in \mathbb{R}^n 
  \;\colon\;
    \abs*{
      \argmax_{x \in \mathcal{Y}} \inpr{c, x}
    } \ge 2
  },
\]
where $\mathcal{Y}$ is the set of vertices of $X_t$.
The is nothing but the boundary of normal cones, $\bigcup_{x \in \mathcal{Y}}\mathrm{bd}(N(x))$, since 
\begin{align}
  c \in F 
  \iff{}& \exists x, x' \in \mathcal{Y} \text{ s.t. } x \neq x', \forall \hat x \in \mathcal{Y}, \inpr{c, x} = \inpr{c, x'} \ge \inpr{c, \hat x}
  \\
  \iff{}& \exists x, x' \in \mathcal{Y} \text{ s.t. } x \neq x', \forall \hat x \in X_t, \inpr{c, x - x'} = 0 \text{ and } \inpr{c, x} \ge \inpr{c, \hat x}
  \\
  \iff{}& \exists x \in \mathcal{Y}, c \in \mathrm{bd}(N(x))
\end{align}
holds, where we used $X_t = \mathrm{conv}(\mathcal{Y})$ and the fact that a (bounded and feasible) LP always has a vertex optimal solution. 
Therefore, we have $F = \bigcup_{x \in \mathcal{Y}} \mathrm{bd}(N(x))$. 
Consequently, \citet[Lemma~18]{Sakaue2024-mu} ensures that the $\Delta$-gap condition holds if $c^*$ is distant from boundaries of all $N(x)$ by at least $\Delta$.
This consequence is intuitive:
$c^*$ close to the boundary of $N(x)$ is ambiguous in that slightly perturbing $c^*$ can lead to a different optimal solution, and the $O(1/\Delta^2)$ regret bound in \cref{thm:fast-rate} means that less ambiguous $c^*$ is easier to infer.

\begin{figure}
  \centering
  \resizebox{0.375\textwidth}{!}{
  \begin{tikzpicture}
    \coordinate (O) at (3, 0); 

    \draw[thick, darkgray] (O) -- ++(160:6) coordinate (A);
    \draw[thick, darkgray] (O) -- ++(240:2) coordinate (B); 
    \coordinate (C) at (A |- B);
    \fill[gray!20] (O) -- (A) -- (C) -- (B) -- cycle;

    \draw[thick, blue] (O) -- ++(70:4) coordinate (E);
    \draw[thick, blue] (O) -- ++(330:3) coordinate (F);
    \fill[blue!10] (O) -- (E) -- (F) -- cycle;

    \coordinate (OA) at ($(O)!0.25cm!(A)$);
    \coordinate (OE) at ($(O)!0.25cm!(E)$);
    \coordinate (R) at ($(OA) + (OE) - (O)$);
    \draw[black] (OA) -- (R) -- (OE);

    \coordinate (OB) at ($(O)!0.25cm!(B)$);
    \coordinate (OF) at ($(O)!0.25cm!(F)$);
    \coordinate (S) at ($(OB) + (OF) - (O)$);
    \draw[black] (OB) -- (S) -- (OF);

    \draw[line width=2pt, darkgray, dashed] (O) -- ++(170:4) coordinate (G); 
    \filldraw[darkgray] (G) circle (2.5pt);

    \draw[line width=2pt, ->, darkgray] (O) -- ++(-10:1.5) coordinate (H); 
    \draw[line width=2pt, ->, blue] (O) -- ++(20:2) coordinate (I);

    \filldraw[darkgray] (O) circle (2.5pt);

    \node[below, yshift=-10pt] at (O) {\Large \textcolor{black}{$x$}}; 
    \node[below] at (G) {\Large \textcolor{black}{$\hat x$}};
    \node[below, yshift=-35pt] at (G) {\Large \textcolor{black}{$X_t$}};  

    \node[right, yshift=0pt] at (H) {\LARGE \textcolor{black}{$\frac{x - \hat x}{\norm{x - \hat x}}$}}; 
    \node[right, yshift=7pt] at (I) {\Large \textcolor{blue}{$c^*$}};
    \node[xshift=-18pt, yshift=25pt] at (I) {\Large \textcolor{blue}{$N(x)$}};
  \end{tikzpicture}
  }
  \caption{
    An illustration of the gap condition. 
    The gray area shows polyhedral feasible region $X_t$. 
    The vertex $x$ is the unique optimal solution for $c^*$ if $c^*$ lies in the interior of the normal cone $N(x)$, shown in blue.
    The $\Delta$-gap condition requires that the cosine of the angle between $c^*$ and $x - \hat x$ is at least $\Delta/\norm{c^*}$ for every $\hat x \in X_t$; this is true if $c^*$ (the head of the blue arrow) is distant from the boundary of $N(x)$ (the blue lines) by at least $\Delta$.
  }
  \label{fig:gap-condition}
\end{figure}
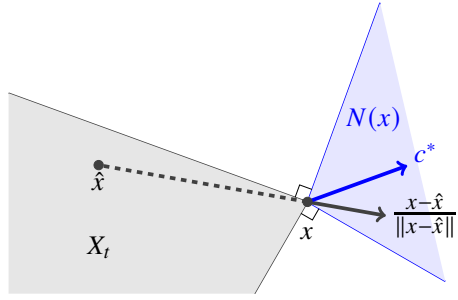

\section{Conclusion}
This paper has revisited the online learning approach to inverse linear optimization. 
We have shown that the \subloss can be seen as a Fenchel--Young loss and that its linearized regret equals the \totloss.
As a byproduct, we have obtained an offline guarantee on the \subloss without assuming the optimality of the agent's choices.
We have also obtained a bound on the cumulative \totloss that is independent of $T$ by exploiting the gap condition on the agent's decision problems.
We believe that our Fenchel--Young loss perspective, which connects inverse linear optimization and OCO, will be beneficial in synthesizing the research streams and paving the way for further investigations.

There are several directions for future work.
For example, analyzing what if we employ other $\Omega$ than the indicator function is interesting from the Fenchel--Young loss perspective.
Investigating the relation to the \emph{Fitzpatrick loss} \citep{Rakotomandimby2024-ac}, a recent framework for designing tighter surrogate losses than Fenchel--Young losses, will also be interesting.
Revealing the lower bound on the cumulative \totloss is also an important problem left for future work.

\subsubsection*{Acknowledgements}
The authors are grateful to anonymous reviewers for their constructive feedback.
SS is supported by JST ERATO Grant Number JPMJER1903.  
HB is supported by JST PRESTO Grant Number JPMJPR24K6.
TT is supported by JST ACT-X Grant Number JPMJAX210E and JSPS KAKENHI Grant Number JP24K23852.

\newrefcontext[sorting=nyt]
\printbibliography[heading=bibintoc]

\appendix
\clearpage
\onecolumn

\section{Proof of \texorpdfstring{\cref{prop:omd-regret-bound}}{Proposition~\ref{prop:omd-regret-bound}}}\label{asec:proof}
This section presents the proof of \cref{prop:omd-regret-bound}.
The following proof is a slight generalization of the analysis of AdaHedge in \citet[Section~7.6]{orabona2023modern}, which we include here for completeness.

\begin{proof}
Based on the notation used in \citet[Section~7]{orabona2023modern}, we define $\psi_t(c) \coloneqq \beta_t(\psi(c) - \min_{c' \in \Theta}\psi(c'))$ and $F_t(c) \coloneqq \psi_t(c) + \sum_{i=1}^{t-1} \inpr{g_i, c}$ for $t = 1,\dots,T$.
Note that replacing the regularizer $\beta_t\psi$ with $\psi_t$ in \cref{alg:omd} does not affect the choice of $\hat c_t$, and hence $\hat c_t \in \argmin_{c \in \Theta} F_t(c)$.
Also, $\psi_{t+1}(c) \ge \psi_t(c)$ holds since $\beta_t$ is non-decreasing.

From \citet[Lemma~7.1]{orabona2023modern}, we have 
\begin{align}
  \begin{aligned}\label{eq:appa}
  R_T 
  ={}& \psi_{T+1}(c^*) - \underbrace{\min_{c\in \Theta}\psi_1(c) }_{=0} +  \sum_{t=1}^T\prn*{F_t(\hat c_t) - F_{t+1}(\hat c_{t+1}) + \inpr{g_t, \hat c_t}} + \underbrace{F_{T+1}(\hat c_{T+1}) - F_{T+1}(c^*)}_{\le 0}
  \\
  \le{}& \beta_{T+1}\prn*{\psi(c^*) - \min_{c \in \Theta}\psi(c)} +  \sum_{t=1}^T\prn*{F_t(\hat c_t) - F_{t+1}(\hat c_{t+1}) + \inpr{g_t, \hat c_t}}
  \\
  \le{}& \beta_{T+1} B^2 + \sum_{t=1}^T\prn*{F_t(\hat c_t) - F_{t+1}(\hat c_{t+1}) + \inpr{g_t, \hat c_t}}.
  \end{aligned}
\end{align}
Below, we derive an upper bound on the second term on the right-hand side.

For now, let $\beta_t = \frac1\alpha\sqrt{\sum_{i=1}^{t-1}\norm{g_i}^2}$, where $\alpha > 0$ is tuned later, and let $D > 0$ denote the diameter of $\Theta$ with respect to $\norm{\cdot}_\star$.
Since $\psi_t$ is $\lambda\beta_t$-strongly convex, \citet[Lemma~7.8]{orabona2023modern} implies
\begin{equation}\label{eq:appc}
  F_t(\hat c_t) - F_{t+1}(\hat c_{t+1}) + \inpr{g_t, \hat c_t} 
  \le \frac{\norm{g_t}^2}{2\lambda\beta_t} + \underbrace{\psi_t(\hat c_{t+1}) -  \psi_{t+1}(\hat c_{t+1})}_{\le 0} 
  \le \frac{\alpha\norm{g_t}^2}{2\lambda\sqrt{\sum_{i=1}^{t-1}\norm{g_i}^2}},
\end{equation}
where the right-hand side is infinite if $\sum_{i=1}^{t-1}\norm{g_i}^2 = 0$.
Also, as in \citet[Section~7.6]{orabona2023modern}, we have 
\begin{align}
  F_t(\hat c_t) - F_{t+1}(\hat c_{t+1}) + \inpr{g_t, \hat c_t} 
  \le{}& 
  F_t(\hat c_{t+1}) - F_{t+1}(\hat c_{t+1}) + \inpr{g_t, \hat c_t}
  \\
  ={}& 
  \psi_t(\hat c_{t+1}) + \sum_{i=1}^{t-1} \inpr{g_i, \hat c_{t+1}}
  - \prn*{\psi_{t+1}(\hat c_{t+1}) + \sum_{i=1}^{t} \inpr{g_i, \hat c_{t+1}}}
  + \inpr{g_t, \hat c_t}
  \\
  ={}&
  \underbrace{\psi_t(\hat c_{t+1}) - \psi_{t+1}(\hat c_{t+1})}_{\le 0}
  -\inpr{g_t, \hat c_{t+1}} + \inpr{g_t, \hat c_t}  
  \\
  \le{}& 
  -\inpr{g_t, \hat c_{t+1}} + \inpr{g_t, \hat c_t} 
  \\
  \le{}& 
  D\norm{g_t}.
\end{align}
These two inequalities imply
\begin{align}
  \sum_{t=1}^T\prn*{F_t(\hat c_t) - F_{t+1}(\hat c_{t+1}) + \inpr{g_t, \hat c_t}}
  \le{}&
  \sum_{t=1}^T \min\set*{
    \frac{\alpha\norm{g_t}^2}{2\lambda\sqrt{\sum_{i=1}^{t-1}\norm{g_i}^2}},
    D\norm{g_t}
  }
  \\
  ={}&
  \sum_{t=1}^T \sqrt{
    \min\set*{
    \frac{\alpha^2\norm{g_t}^4}{4\lambda^2\sum_{i=1}^{t-1}\norm{g_i}^2},
    D^2\norm{g_t}^2
  }
  }
  \\
  \le{}&
  \sum_{t=1}^T \sqrt{
    \frac2{
      \frac{4\lambda^2\sum_{i=1}^{t-1}\norm{g_i}^2}{\alpha^2\norm{g_t}^4}
      +
      \frac1{D^2\norm{g_t}^2}
    }
  }
  \\
  ={}&
  \sum_{t=1}^T 
  \frac{\sqrt{2} \alpha D \norm{g_t}^2}{
      \sqrt{
        4\lambda^2D^2\sum_{i=1}^{t-1}\norm{g_i}^2
        +
        \alpha^2\norm{g_t}^2
      }
    },  
\end{align}
where the second inequality uses the fact that the minimum of positive numbers is at most their harmonic mean.
Therefore, if $\alpha \ge 2\lambda D$, which we confirm shortly, we have
\begin{align}\label{eq:appb}
  \sum_{t=1}^T\prn*{F_t(\hat c_t) - F_{t+1}(\hat c_{t+1}) + \inpr{g_t, \hat c_t}}
  \le
  \frac{\sqrt{2}\alpha}{2\lambda}
  \sum_{t=1}^T 
  \frac{\norm{g_t}^2}{
      \sqrt{
        \sum_{i=1}^{t}\norm{g_i}^2
      }
    }
  \le
  \frac{\sqrt{2}\alpha}{\lambda}
  \sqrt{\sum_{t=1}^{T}\norm{g_t}^2},  
\end{align}
where the last inequality is due to \citet[Lemma~4.13]{orabona2023modern}.

Consequently, from \eqref{eq:appa} and \eqref{eq:appb}, $R_T$ is bounded as
\[
  R_T 
  \le 
  \prn*{
    \frac{B^2}{\alpha}  
    + 
    \frac{\sqrt{2}\alpha}{\lambda}
  }
  \sqrt{\sum_{t=1}^{T}\norm{g_t}^2}.
\]
The right-hand side is minimized when $\alpha = \frac{B\sqrt{\lambda}}{2^{1/4}}$, which satisfies $\alpha \ge 2\lambda D$ due to the assumption of $B^2 \ge 2^{5/2}\lambda D^2$. 
Therefore, \cref{alg:omd} with 
$
  \beta_t = \frac{2^{1/4}}{B}\sqrt{\frac{\sum_{i=1}^{t-1} \norm{g_i}^2}{\lambda}}
$
attains $R_T \le 2^{5/4}B \sqrt{\frac{1}{\lambda}\sum_{t=1}^{T}\norm{g_t}^2}$.
\end{proof}

\section{Recovering the Same Bound as \citet[Theorem~3.3]{Barmann2017-wl}}\label{asec:barmann}
As with \citet[Theorem~3.3]{Barmann2017-wl}, we assume that the diameter of $X_t$'s with respect to $\norm{\cdot} = \norm{\cdot}_\infty$ is at most $K > 0$.
We define $H >0$ as a constant such that $\psi(c^*) - \min_{c \in \Theta}\psi(c) \le H^2$. 
We consider using FTRL (\cref{alg:omd}) with the following choice of $\beta_t$: 
\begin{equation}
  \beta_t = \frac{1}{H\sqrt{\lambda}}\sqrt{K^2 + \sum_{i=1}^{t-1}\norm{g_i}^2},
\end{equation}
which is also non-decreasing. 
We let $\beta_{T+1} = \beta_T$, which does not affect the analysis \citep[Remark~7.3]{orabona2023modern}.

From \eqref{eq:appa}, \eqref{eq:appc}, and $\beta_{T+1} =\beta_T \le \frac{K}{H}\sqrt{\frac{T}{\lambda}}$, which is due to $\norm{g_i} = \norm{\hat x_i - x_i} \le K$, we have 
\begin{equation}
  R_T 
  \overset{\text{\eqref{eq:appa}}}{\le} 
  \beta_{T+1} H^2 + \sum_{t=1}^T\prn*{F_t(\hat c_t) - F_{t+1}(\hat c_{t+1}) + \inpr{g_t, \hat c_t}}
  \overset{\text{\eqref{eq:appc}}}{\le}
  \beta_{T+1} H^2 + \sum_{t=1}^T\frac{\norm{g_t}^2}{2\lambda\beta_t}
  \le
  KH\sqrt{\frac{T}{\lambda}} + \sum_{t=1}^T\frac{\norm{g_t}^2}{2\lambda\beta_t}.
\end{equation}
The second term on the right-hand side is bounded as
\[
  \sum_{t=1}^T\frac{\norm{g_t}^2}{2\lambda\beta_t}
  = 
  \sum_{t=1}^T\frac{\norm{g_t}^2}{\frac{2\sqrt{\lambda}}{H}\sqrt{K^2 + \sum_{i=1}^{t-1}\norm{g_i}^2}}
  \le
  \frac{H}{2\sqrt{\lambda}}\sum_{t=1}^T\frac{\norm{g_t}^2}{\sqrt{\sum_{i=1}^{t}\norm{g_i}^2}}
  \le
  \frac{H}{\sqrt{\lambda}}\sqrt{\sum_{t=1}^{T}\norm{g_t}^2}
  \le 
  KH\sqrt{\frac{T}{\lambda}},
\]
where we used $\norm{g_t} = \norm{\hat x_t - x_t} \le K$ in the first and last inequalities, and \citet[Lemma~4.13]{orabona2023modern} in the second inequality, as in \eqref{eq:appb}.
Consequently, we obtain $R_T \le 2KH\sqrt{\frac{T}{\lambda}}$. 

As discussed in \cref{subsec:omd}, if $\Theta$ is the probability simplex and $\psi\colon\Theta\to\R$ is the negative Shannon entropy, we have $H = \sqrt{\ln n}$ and $\lambda = 1$, recovering $R_T \le 2K\sqrt{T\ln n}$. 
This bound is exactly the same as that of \citet[Theorem~3.3]{Barmann2017-wl}, including the constant factor of $2$.

\section{Proof of \texorpdfstring{\cref{thm:online-to-batch}}{Theorem~\ref{thm:online-to-batch}}}\label{asec:online-to-batch}


\begin{proof}
  Since the \subloss $\ell^\mathrm{sub}$ is convex due to \cref{prop:fyloss-properties,prop:subloss-fyloss}, the claim follows from the standard online-to-batch conversion scheme (e.g., \citealt[Theorem~3.1]{orabona2023modern}), which we detail below for completeness.
  Since $\hat c_t$ is independent of $\set*{(X_i, x_i)}_{i\ge t}$,
  \begin{equation}
    \E\brc*{
      \ell^\mathrm{sub}_{X, x}(\hat c_t)
    }
    =
    \E\brc*{
    \E_{(X_t, x_t)\sim\Dcal}\Brc*{
    \ell^\mathrm{sub}_{(X_t, x_t)}(\hat c_t)
    }{
      \set*{(X_i, x_i)}_{i=1}^{t-1}
    } 
    }
    =
    \E\brc*{
    \ell^\mathrm{sub}_t(\hat c_t)
    } 
  \end{equation}
  holds by the law of total expectation.
  By using this, together with $\E\brc*{\ell^\mathrm{sub}_{X,x}(\hat c)} \le \E\brc*{\frac1T\sum_{t=1}^T \ell^\mathrm{sub}_{X,x}(\hat c_t)}$, which follows from Jensen's inequality, and 
  $\E_{(X, x)\sim\Dcal}[\ell^\mathrm{sub}_{(X, x)}(c^*)] = \E[\ell^\mathrm{sub}_t(c^*)]$, we obtain 
  \begin{equation}
  \E\brc*{
    \ell^\mathrm{sub}_{X, x}(\hat c)
  }
  -
  \E_{(X, x) \sim \Dcal}\brc*{
    \ell^\mathrm{sub}_{X, x}(c^*)
  }
  \le{}
  \E\brc*{
    \frac{1}{T}\sum_{t=1}^T \prn*{\ell^\mathrm{sub}_t(\hat c_t) - \ell^\mathrm{sub}_t(c^*)}
  }
  =
  \E\brc*{
    \frac{R^\mathrm{sub}_T}{T}
  }.   
  \end{equation}
  The bound with \cref{alg:omd} follows from $R^\mathrm{sub}_T \le R_T$ due to \cref{cor:subopt}. 
\end{proof}

\end{document}